\theoremstyle{plain}
\newtheorem{theorem}{Theorem}[section]
\theoremstyle{definition}
\theoremstyle{remark}
\setlist[itemize]{leftmargin=10mm}
\DeclareMathOperator*{\Expt}{\mathbb{E}}
\newcommand{\hypothesisclass}{\mathcal{F}}
\newcommand{\setfeaturegen}{\mathcal{H}}
\newcommand{\featuregen}{H}
\newcommand{\numfeature}{\mathcal{X}^{\text{num}}}
\newcommand{\catfeature}{\mathcal{X}^{\text{cat}}}
\newcommand{\popunumfeature}{\mathcal{Q}}
\newcommand{\Dtrain}{\mathcal{D}_{\text{train}}}
\newcommand{\Dtest}{\mathcal{D}_{\text{test}}}
\newcommand{\ktrain}{k_1}
\newcommand{\ktest}{k_2}
\newcommand{\nmin}{h}
\newcommand{\group}{\mathsf{g}}
\newcommand{\Distr}{\mathcal{P}}
\newcommand{\xmark}{{\color{red} \ding{55}}}
\icmltitlerunning{OpenFE: Automated Feature Generation with Expert-level Performance}
\begin{document}

\twocolumn[
\icmltitle{OpenFE: Automated Feature Generation with Expert-level Performance}

% It is OKAY to include author information, even for blind
% submissions: the style file will automatically remove it for you
% unless you've provided the [accepted] option to the icml2023
% package.

% List of affiliations: The first argument should be a (short)
% identifier you will use later to specify author affiliations
% Academic affiliations should list Department, University, City, Region, Country
% Industry affiliations should list Company, City, Region, Country

% You can specify symbols, otherwise they are numbered in order.
% Ideally, you should not use this facility. Affiliations will be numbered
% in order of appearance and this is the preferred way.
% \icmlsetsymbol{equal}{*}
\setlength{\itemsep}{-1em}

\begin{icmlauthorlist}
\icmlauthor{Tianping Zhang}{thu}
\icmlauthor{Zheyu Zhang}{thu}
\icmlauthor{Zhiyuan Fan}{thu}
\icmlauthor{Haoyan Luo}{cuhk}
\icmlauthor{Fengyuan Liu}{uw}
\icmlauthor{Qian Liu}{sea}
\icmlauthor{Wei Cao}{msra}
%\icmlauthor{}{sch}
\icmlauthor{Jian Li}{thu}
%\icmlauthor{}{sch}
%\icmlauthor{}{sch}
\end{icmlauthorlist}

\icmlaffiliation{thu}{Institute for Interdisciplinary Information Sciences (IIIS), Tsinghua University, Beijing, China}
\icmlaffiliation{cuhk}{School of Data Science, The Chinese University of Hong Kong (Shenzhen), Shenzhen, China}
\icmlaffiliation{msra}{Microsoft Research Asia, Beijing, China}
\icmlaffiliation{sea}{Sea AI Lab, Singapore}
\icmlaffiliation{uw}{Paul G. Allen School of Computer Science \& Engineering, University of Washington, Seattle, U.S.}

\icmlcorrespondingauthor{Jian Li}{lijian83@mail.tsinghua.edu.cn}

% You may provide any keywords that you
% find helpful for describing your paper; these are used to populate
% the "keywords" metadata in the PDF but will not be shown in the document
\icmlkeywords{Tabular Data, Automated Feature Generation}

\vskip 0.3in
]

% this must go after the closing bracket ] following \twocolumn[ ...

% This command actually creates the footnote in the first column
% listing the affiliations and the copyright notice.
% The command takes one argument, which is text to display at the start of the footnote.
% The \icmlEqualContribution command is standard text for equal contribution.
% Remove it (just {}) if you do not need this facility.

\printAffiliationsAndNotice{}  % leave blank if no need to mention equal contribution
% \printAffiliationsAndNotice{\icmlEqualContribution} % otherwise use the standard text.

\begin{abstract}
The goal of automated feature generation is to liberate machine learning experts from the laborious task of manual feature generation, which is crucial for improving the learning performance of tabular data. The major challenge in automated feature generation is to efficiently and accurately identify effective features from a vast pool of candidate features. In this paper, we present OpenFE, an automated feature generation tool that provides competitive results against machine learning experts. OpenFE achieves high efficiency and accuracy with two components: 1) a novel feature boosting method for accurately evaluating the incremental performance of candidate features and 2) a two-stage pruning algorithm that performs feature pruning in a coarse-to-fine manner. Extensive experiments on ten benchmark datasets show that OpenFE outperforms existing baseline methods by a large margin. We further evaluate OpenFE in two Kaggle competitions with thousands of data science teams participating. In the two competitions, features generated by OpenFE with a simple baseline model can beat 99.3\% and 99.6\% data science teams respectively. In addition to the empirical results, we provide a theoretical perspective to show that feature generation can be beneficial in a simple yet representative setting.
\end{abstract}

\section{Introduction}
Feature generation is an important yet challenging task when applying machine learning methods to tabular data. 
% Tabular data, where each row represents an instance and each column corresponds to a distinct feature, is ubiquitous in industrial applications and machine learning competitions. 
It has been well recognized that the quality of features has a significant impact on the learning performance of tabular data~\citep{AFewUsefulThingsToKnowAboutMachineLearning}. The goal of feature generation is to transform the base features into more informative ones to better describe the data and enhance the learning performance. Figure \ref{fig:tabular_demo} demonstrates an example tabular data from~\citet{diabetes}, where the goal is to predict whether a patient will be readmitted to the hospital. In the example, we can transform the base features ``Age'' and ``\#\,Diagnose'' via  ``GroupByThenMean(Age, \#\,Diagnose)'', which calculates the mean number of diagnoses for each age group, and may be used to better characterize the age group and enhance the learning performance.
% For example, Price-to-Earnings ratio (P/E ratio), calculated as $(\text{share price})/(\text{earnings per share})$, is derived from the base features ``$\text{share price}$'' and ``$\text{earnings per share}$'' in financial statements and informs investors about the value of a company.
% \jian{THIS MAY NOT BE A GOOD EXAMPLE FOR ML EXPERT. USE SOME EX FROM KAGGLE?}
In practice, data scientists typically use their domain knowledge to find useful feature transformations in a trial-and-error manner, which requires tremendous human labor and expertise.

\begin{figure}
    \centering
    \includegraphics[width=0.48\textwidth]{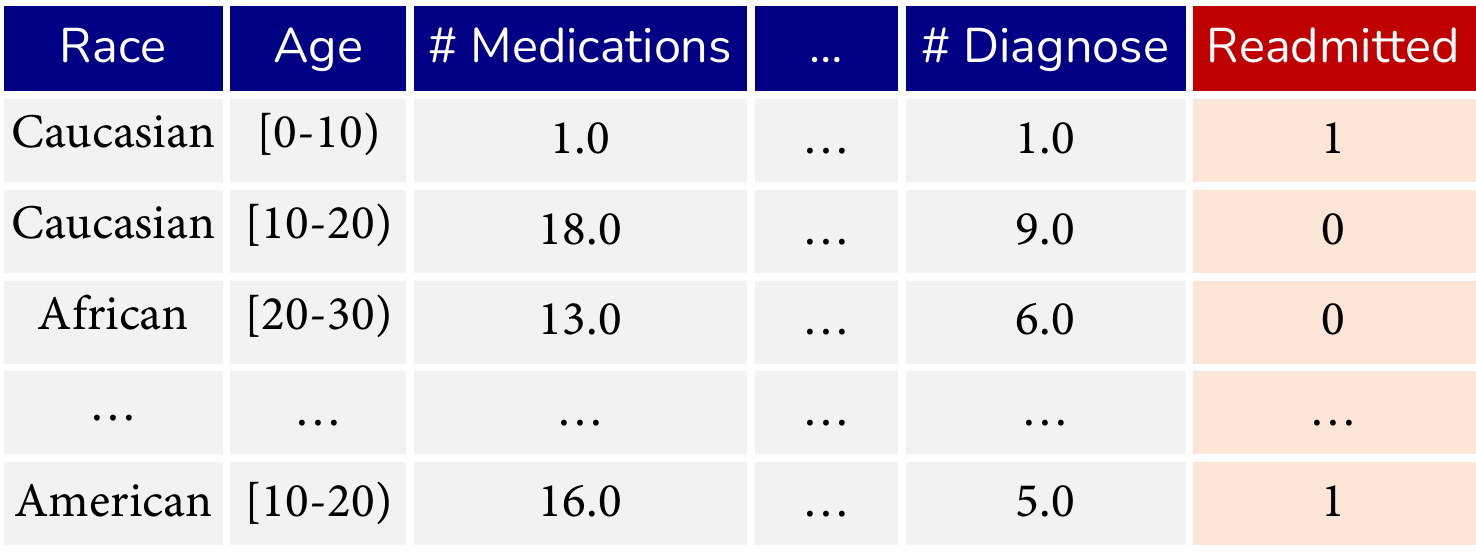}
    \caption{An example tabular data from \citet{diabetes}. The blue columns indicate the base features. The red (last) column is the prediction target.}
    \label{fig:tabular_demo}
    \vspace{-3mm}
\end{figure}

Since manual feature generation is time-consuming and requires case-by-case domain knowledge, automated feature generation emerges as an important topic in automated machine learning~\citep{autogluon,googleAI}. Expand-and-reduce is arguably the most prevalent framework in automated feature generation, in which we first create a pool of candidate features by expanding the base features and then eliminate the ineffective  candidate features~\citep{DSM,OneBM,autolearn,SAFE,ExploreKit}. There are two challenges in a typical expand-and-reduce practice. The first challenge is to efficiently and accurately evaluate the incremental performance of a new feature, i.e., how much performance improvement a new candidate feature can offer when added to the base feature set. A standard evaluation procedure involves including the new feature in the base feature set, retraining the machine learning model, and observing the change in the validation loss~\citep{ExploreKit}. However, retraining the model is usually time-consuming, making the standard evaluation procedure prohibitively expensive for large datasets. The second challenge is to calculate and evaluate the huge number of candidate features. There may be millions of candidate features for a dataset containing hundreds of features. Even with an efficient evaluation algorithm, computing all candidate features on the dataset is computationally costly and often impractical due to the enormous amount of memory required. 
% The second challenge is how to efficiently and accurately estimate the incremental performance of a new feature, i.e., how much performance improvement a new candidate feature can offer when added to the base feature set.
% The majority of existing methods rely on statistical tests to determine if a new feature should be included~\citep{DSM,OneBM,SAFE}. However, statistically significant features do not always translate into good predictors~\citep{significant}. Features may be significantly correlated with the target simply for a small group of instances in the population, thereby leading to poor prediction in the population~\citep{significant, civil, financial}
% \jian{not sure what this sentence what to express?}
% . 
% This phenomenon is pervasive in different types of data across the sciences~\citep{significant} as well as the social sciences~\citep{civil,financial}. 
% Besides, the effectiveness of a new feature may be encompassed by the base feature set, even if the new feature is significantly correlated with the target.

% In this paper, we propose OpenFE, a powerful automated feature generation algorithm that can effectively generate useful features to enhance learning performance.
This paper presents OpenFE, which addresses the two challenges by a feature boosting algorithm, and a two-stage pruning algorithm.
Regarding the first challenge, we argue that model retraining is not required for accurately evaluating the incremental performance of new features. Motivated by gradient boosting, we propose FeatureBoost, an efficient algorithm for evaluating the incremental performance of new features. FeatureBoost performs incremental training on top of the predictions yielded by the base feature set, which is substantially more efficient than retraining the model. For the second challenge, we propose a two-stage pruning algorithm to efficiently retrieve effective features from the vast pool of candidate features. Due to the fact that the effective features are usually sparse, the two-stage pruning algorithm performs feature pruning in a coarse-to-fine manner.
% First, motivated by the gap between significant features and good predictors
% \jian{strange sentence}, we propose a feature boosting method that directly estimates the predictive power of new features in addition to the base feature set. Second, inspired by the crucial fact that effective features are usually sparse in the huge number of candidate features \jian{strange sentence}, we propose a two-stage evaluation framework. In the first stage, we propose a successive featurewise pruning algorithm to quickly eliminate redundant candidate features by dynamically allocating computing resources to promising ones. In the second stage, we propose a feature importance attribution method to rank the remaining candidate features based on their contributions to the improvement in the learning performance and further eliminate redundant candidate features. 
We validate OpenFE on various datasets and Kaggle competitions, where OpenFE outperforms existing baseline methods by a large margin. In two famous Kaggle competition with thousands of data science teams participating~\citep{IEEE,BNP}, the baseline model with features generated by OpenFE beats 99.3\% and 99.6\% data science teams. More importantly, the features generated by OpenFE result in comparable or even greater performance improvement than those provided by the competition's first-place team, demonstrating that OpenFE is competitive against machine learning experts in feature generation.

\begin{figure}[tb]
    \centering
    \includegraphics[width=.48\textwidth]{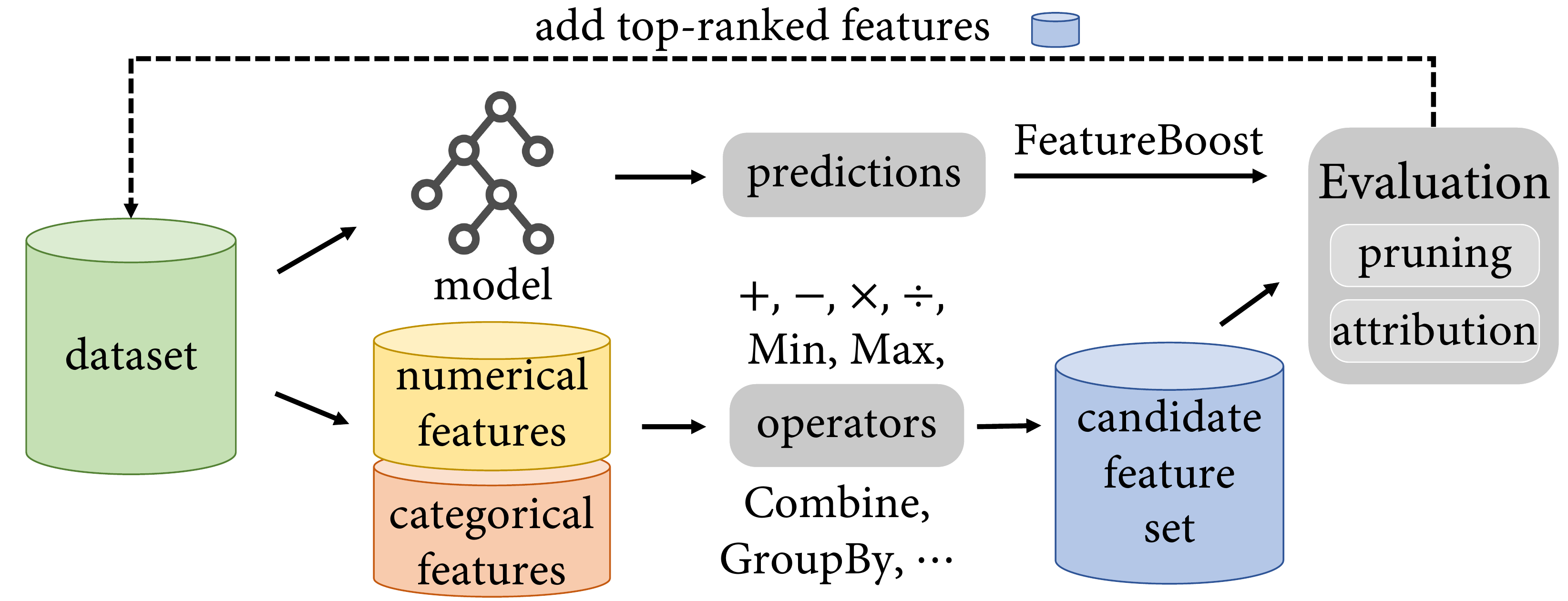}
    \caption{The overview of OpenFE.}
    \label{fig: overview}
    \vspace{-3mm}
\end{figure}

In addition to proposing a novel method, this paper intends to address two important problems that hinder the research process of automated feature generation. The first problem is a lack of fair comparisons of existing methods. The majority of existing methods do not open-source their codes, and some previous studies evaluate their methods without using a held-out test set~\citep{NFS,DBLP:conf/automl/DIFER,FETCH}. We reproduce most existing methods and perform a large-scale benchmarking on existing methods to facilitate fair comparisons in future research. The second problem is the lack of evidence regarding the necessity of feature generation in the era of deep learning. 
% Deep neural networks (DNNs) are widely recognized for their ability to extract feature representations, especially in computer vision and natural lauguage processing. 
In recent years, a variety of deep neural networks (DNNs) have been developed for modeling tabular data~\citep{Tabnet,Revisiting}, and several of them have demonstrated their efficiency in feature interaction learning~\citep{Autoint,DCNV2}. We conduct a large-scale empirical study to demonstrate that the generated features can further enhance the learning performance of several state-of-the-art DNNs. In addition to the empirical results, we provide a theoretical justification of our feature generation procedure by presenting a simple yet representative transductive learning setting in which feature generation can bring benefit to the learning algorithm provably.
We summarize the contributions of our paper as follows:

\begin{itemize}
\item We propose OpenFE, a novel automated feature generation method that can efficiently and accurately identify useful new features to enhance learning performance. OpenFE achieves both efficiency and accuracy with a feature boosting method and a two-stage pruning algorithm.
\item Extensive experiments show that OpenFE outperforms baselines on $10$ benchmark datasets by a large margin. More importantly, the case study on two Kaggle competitions demonstrates that OpenFE can compete in feature generation against human experts.
\item We present a theoretical model to show that the model using the generated features can be statistically more advantageous than that using only the base features.
\item We facilitate future research by reproducing existing methods and performing a large-scale benchmarking on existing methods. The codes and datasets are available at \url{https://github.com/IIIS-Li-Group/OpenFE}
\end{itemize}

% \jian{three point??}

\section{Problem Definition}
For a given training dataset $\mathcal{D}$, we split it into a sub-training set $\mathcal{D}_{tr}$ and a validation set $\mathcal{D}_{vld}$. Assume $\mathcal{D}$ consists of a feature set $\mathcal{T}+\mathcal{S}$, where $\mathcal{T}$ is the base feature set and $\mathcal{S}$ is the generated feature set. We use a learning algorithm $\mathcal{L}$ to learn a model $\mathcal{L}(\mathcal{D}_{tr},\mathcal{T}+\mathcal{S})$, and compute the evaluation metric $\mathcal{E}(\mathcal{L}(\mathcal{D}_{tr},\mathcal{T}+\mathcal{S}),\mathcal{D}_{vld},\mathcal{T}+\mathcal{S})$ to measure the model performance, with a larger value indicating better performance. Formally, the feature generation problem is:
\begin{equation}
\max_{\mathcal{S}\subseteq A(\mathcal{T})} \mathcal{E}(\mathcal{L}(\mathcal{D}_{tr},\mathcal{T}+\mathcal{S}),\mathcal{D}_{vld},\mathcal{T}+\mathcal{S}),
\end{equation}
where $A(\mathcal{T})$ is the set of all possible candidate features generated from the base feature set. The goal of feature generation is to find a feature set $\mathcal{S}$ from $A(\mathcal{T})$ that maximizes the evaluation metric. 
% We determine $A(\mathcal{T})$ by the set $\mathcal{O}$ of operators we use to transform the base features. The operators $\mathcal{O}$ include unary operators, such as $\log$, $\mathrm{sigmoid}$, $\mathrm{square}$, and binary operators, such as $\times$, $\div$, $\min$, $\max$, $\mathrm{GroupByThenMean}$. Appendix \ref{app: sec: operators} contains the list of operators and how we use them to transform numerical and categorical features. Our method can be extended to include other operators, and users can also define their own operators.

\section{OpenFE}
% \subsection{Overview}
\label{sec: overview}

\begin{algorithm}[tb]
    \caption{\;\texttt{OpenFE}}
    \label{alg: OpenFE} 
\begin{algorithmic}
   \STATE {\bfseries Input:} $\mathcal{D}$: dataset, $\mathcal{T}$: feature set, $\mathcal{O}$: operators
   \STATE {\bfseries Output:} new feature set
   \STATE Initialize order $= 1$.
   \WHILE{order $<$ predefined max order}
       \STATE $\triangleright$ Expansion step
       % \STATE $A(\mathcal{T}) \leftarrow \text{Expand}(\mathcal{O},\mathcal{T})$.
       \STATE Initialize $A(\mathcal{T})$ by applying $\mathcal{O}$ on $\mathcal{T}$.
        \STATE $\triangleright$ Reduction step
        \STATE $\bm{\hat{y}} \leftarrow$ generate predictions with $\mathcal{T}$ on $\mathcal{D}$.
        \STATE $A(\mathcal{T})=\texttt{SuccessivePruning}(A(\mathcal{T}),\mathcal{D},\bm{\hat{y}}$).
        \STATE $A(\mathcal{T})=\texttt{FeatureAttribution}(A(\mathcal{T}),\mathcal{D},\bm{\hat{y}}$).
        \STATE $\mathcal{T} \leftarrow \mathcal{T} + \text{Top\_k}(A(\mathcal{T}))$.
        \STATE order $\leftarrow$ order + 1.
   \ENDWHILE
   \STATE \textbf{return} $\mathcal{T}$.
\end{algorithmic}
\end{algorithm}

OpenFE follows the expand-and-reduce framework~\citep{DSM,OneBM,autolearn,SAFE,ExploreKit} for automated feature generation, in which we first expand the candidate feature space, and then reduce the space by removing ineffective features. The procedure of OpenFE is presented in Algorithm~\ref{alg: OpenFE} and Figure \ref{fig: overview}. In the following we describe the expansion step and the reduction step in turn.

\subsection{The Expansion Step}
% The expansion framework consists of two components: 1) a set of operators to transform the base features, and 2) an algorithm to expand the search space by the operators.
% \jian{useless}

\begin{table}[bt]
\centering
\small
\caption{Examples of operators and their effects on features in Figure~\ref{fig:tabular_demo}. All examples follow the Value$_\mathrm{\,Name}$ format.}
\resizebox{1.0\linewidth}{!}{
\begin{tabular}{ccc}\toprule
Type & Operator & Example \\ \midrule
Unary &  $\mathrm{Freq}$ & African$_\mathrm{\,Race}$ $\rightarrow 0.21$ \\
% \cmidrule(l){2-3}
Unary &  $\mathrm{Log}$ & ${1.0}_\mathrm{\,\# Diagnose}$ $\rightarrow 0.0$  \\
Binary & $\mathrm{Min}$ & ($18.0_\mathrm{\,\# Diagnose},9.0_\mathrm{\,\# Medications}$) $\rightarrow 9.0$ \\
% \cmidrule(l){2-3}
% Binary & $\mathrm{GroupByThenMean}$ & ($[0-10)_\mathrm{\,Age},5.0_\mathrm{\,\# Diagnose}$) $\rightarrow 3.5$\\
Binary & $\div$ & ($16.0_\mathrm{\,\# Diagnose},5.0_\mathrm{\,\# Medications}$) $\rightarrow 3.2$\\
\bottomrule
\end{tabular}
}
\label{tab: op_example}
\end{table}

In the expansion step, we transform the base features by the operators to create a pool of candidate features.
The operators are critical in the expansion step, as they determine the search space for new features. The operators are classified into unary operators (such as $\log$, $\mathrm{sigmoid}$, $\mathrm{square}$), and binary operators, (such as $\times$, $\div$, $\min$, $\max$, $\mathrm{GroupByThenMean}$) according to the number of features they operate on. We present several examples that use the operators to transform base features in Table \ref{tab: op_example}. The details of all the operators are presented in Appendix \ref{app: sec: operators}. The candidate feature space is expanded via enumeration. For each operator, we iterate over all the base features and transform the base features into new features by the operator. For instance, for the operator ``$\div$'', we iterate over all possible numerical feature pairs $\{(\tau_1,\tau_2)\}$ to generate a list of candidate features $\{\tau_1\div\tau_2\}$. The candidate feature space contains all the first-order transformations of base features, where each transformation uses one operator. Given a dataset with $m$ features, the number of candidate features is $O(dm^2)$, where $d$ is the number of binary operators. There may be millions of candidate features for a dataset containing hundreds of features.

% The operators are one of the most critical components in the expansion step, as they determine the search space for new features. However, most existing works~\citep{FCTree,SAFE,NFS,FETCH} only used simple arithmetic operators (such as $+,-,\times,\div$) for transforming base features. One probable explanation is that innovation on operators is typically undervalued and viewed as a pure engineering endeavor. However, such an engineering endeavor is crucial for feature engineering, particularly in practical applications. To fill this blank in previous studies, we investigated the winning solutions over 20 Kaggle competitions and counted the number of occurrences of each operator. We established a list of 23 operators that are most frequently employed by the Kaggle masters. We present the details in Appendix \ref{app: sec: operators}.
% \jian{shorten and move to the end of the sec. write 
% a short paragraph of discussion in the appendix}

% The operators are classified into unary operators (such as $\log$, $\mathrm{sigmoid}$, $\mathrm{square}$), and binary operators, (such as $\times$, $\div$, $\min$, $\max$, $\mathrm{GroupByThenMean}$) according to the number of features they operate on. The candidate feature space is expanded via enumeration. For each operator, we iterate over all the base features and transform the base features into new features by the operator (see details in Algorithm~\ref{alg: expand} in the appendix).
% \jian{elaborate slightly}

\subsection{The Reduction Step}
After the expansion step, we obtain a vast pool of candidate features. Therefore, the reduction framework aims at efficiently and accurately identifying the effective features from the vast pool of candidate features. To this end, first we propose FeatureBoost, an efficient algorithm to evaluate the effectiveness of new features. Although FeatureBoost provides an efficient way to evaluate each candidate feature, it is still computationally expensive to compute and evaluate the huge number of candidate features. Therefore, we propose a two-stage pruning algorithm on top of FeatureBoost to further accelerate the reduction step. 
% There are two challenges in the reduction step to achieve both efficiency and accuracy. The first challenge is to accurately evaluate the effectiveness of new features in an efficient way. The second challenge is to calculate and evaluate all the candidate features within limited computing resources, given the 
% After expanding the candidate features,  the reduction framework reduces the number of candidate features by identifying and removing redundant ones. 
% Efficiency and accuracy are two desiderata in a reduction framework, where the goal is to efficiently and accurately identify the effective features from the vast pool of candidate features. To this end, we first propose FIERCE, a Fast Incremental pERformanCe Evaluation algorithm.
% FIERCE is then applied on a two-stage pruning framework we propose to efficiently remove the redundant candidate features and provide a ranking to the remaining ones.

% \jian{i thought about the name. i think it is ok
% to use something like FeatureBoost.
% We don't need another fancy name for a subprocedure}

\subsubsection{FeatureBoost: Fast Incremental Performance Evaluation}
\label{sec: feature boosting}

\begin{algorithm}[t]
   \caption{\;\;\texttt{FeatureBoost}}
   \label{alg: feature_boosting}
\begin{algorithmic}
   \STATE {\bfseries Input:} $\mathcal{D}$: dataset, $\mathcal{T}'$: feature set, $\bm{\hat{y}}$: predictions on $\mathcal{T}$
   \STATE {\bfseries Output:} incremental performance of $\mathcal{T}'$
   \STATE Initialize $L(f)$ as the objective function of $f$ with $\mathcal{T}$.
   \STATE Initialize a new model $f'$.
   \STATE Optimize $L(f')=\sum_{i=1}^nl(y_i,{\hat{y}_i}+f'(\bm{x}_i[{\mathcal{T}'}]))$
   % \STATE Compute $L(f)-L(ff')$ as the incremental performance of $\mathcal{T}'$.
   \STATE $\Delta \leftarrow L(f)-L(f')$
   \STATE \bf{return} $\Delta$
   
\end{algorithmic}
\end{algorithm}

One of the key challenges in automated feature generation is to accurately estimate the incremental performance of a new feature, i.e., how much performance improvement the new feature can offer when added to the base feature set. 
% \jian{why not define Delta right here?
% otherwise, many different terms are used throughout the paper
% without a clear definition, including incremental performance,
% feature effectivenss, the reduction of the loss etc.
% }
% \jian{for one new feature? or for a set of feature?}
A standard evaluation procedure involves including the new feature in the base feature set, retraining the machine learning model, and observing changes in the objective function~\citep{ExploreKit}. In the following we formally describe the standard evaluation procedure.
Given a dataset of $n$ samples, we denote the dataset $\mathcal{D}=\{(\bm{x}_i[\mathcal{T}],y_i)\,|\,i=1,2,\cdots,n\}$, where $\bm{x}_i[{\mathcal{T}}]\!\in\!\mathbb{R}^{|\mathcal{T}|}$ denotes the $i$-th sample projected to the feature set $\mathcal{T}$.
Given a trained machine learning model $f$ using $\mathcal{T}$, we denote the predictions as $\hat{y}_i=f(\bm{x}_i[{\mathcal{T}}])$ and the objective function as $L(f)=\sum_{i=1}^nl(y_i,f(\bm{x}_i[{\mathcal{T}}]))$, where $l(\cdot,\cdot)$ is the loss function (e.g., the mean squared error).
When introducing a set of new features $\mathcal{T}'$, the standard procedure retrains a new model $f'$ with the full feature set $\mathcal{T}+\mathcal{T}'$, and computes the objective function $L(f')=\sum_{i=1}^nl(y_i,f'(\bm{x}_i[{\mathcal{T}+\mathcal{T}'}]))$.
The loss improvement $L(f)-L(f')$ indicates the incremental performance of $\mathcal{T}'$, which is larger the better.
However, retraining the model with the full feature set is usually time-consuming, making the standard evaluation procedure prohibitively expensive.

\begin{algorithm}[t]
   \caption{\;\;\texttt{SuccessivePruning}}
   \label{alg: successive}
\begin{algorithmic}
   \STATE {\bfseries Input:} $\mathcal{D}$: dataset, $\bm{\hat{y}}$: predictions on $\mathcal{T}$,\\ $A(\mathcal{T})$: candidate feature set, $q$: integer
   \STATE {\bfseries Output:} pruned new feature set
   \STATE Divide $\mathcal{D}$ equally into $2^q$ data blocks.
   \STATE $A_0(\mathcal{T}) \leftarrow A(\mathcal{T})$.
   \FOR{$i=0$ {\bfseries to} $q$}
       \STATE $\triangleright$ Create a subset $\mathcal{D}_i$ with $2^{i}$ randomly selected data blocks
        \FOR{new feature $\tau\in A_i(\mathcal{T})$}
            \STATE $\Delta_{\tau}=\texttt{FeatureBoost}(\mathcal{D}_i,\{\tau\},\bm{\hat{y}})$.
        \ENDFOR
        \STATE $A_{i}(\mathcal{T}) \leftarrow$ deduplicate $A_{i}(\mathcal{T})$.
        \STATE $A_{i+1}(\mathcal{T}) \leftarrow$ Take the top half of $A_{i}(\mathcal{T})$ based on $\Delta$.
   \ENDFOR
   \FOR{$\tau\in A_{q+1}(\mathcal{T})$}
       \IF{$\Delta_{\tau} \leq 0$}
            \STATE $A_{q+1}(\mathcal{T}) \leftarrow A_{q+1}(\mathcal{T}) \backslash \{{\tau}\}$.
        \ENDIF
    \ENDFOR
   \STATE \textbf{return} $A_{q+1}(\mathcal{T})$
\end{algorithmic}
\end{algorithm}

In this paper, we argue that it is unnecessary to retrain the model to estimate the incremental performance of new features.
In response, we present FeatureBoost, a fast incremental performance evaluation algorithm.
As illustrated in Algorithm~\ref{alg: feature_boosting}, the input to FeatureBoost comprises the dataset $\mathcal{D}$, the feature set $\mathcal{T}'$, and the prediction $\bm{\hat{y}}$.
Inspired by gradient boosting, FeatureBoost only trains a model with the new feature set $\mathcal{T}'$ to fit the residuals between $\bm{\hat{y}}$ and the target $\bm{y}$.
Formally, it optimizes the new model $f'$ to minimize $L(f')=\sum_{i=1}^nl(y_i,{\hat{y}_i}+f'(\bm{x}_i[{\mathcal{T}'}]))$.
% We can fix $f$ (treat $f(x_i^{\mathcal{T}})$ as a constant) and optimize $f^*$ via gradient descent.
Finally, $\Delta=L(f)-L(f')$ serves as an estimate of the incremental performance of $\mathcal{T}'$.
% Notally

% \paragraph{Discussion on the input of FeatureBoost.} 

% The features $\mathcal{T}^*$ can be  or $\mathcal{T}+\mathcal{T}'$ for different purposes, which will be elaborated later.
% When the input features $\mathcal{T}^*=\mathcal{T}'$, $\Delta$ does not consider the interaction effects between the new features $\mathcal{T}'$ and the original features $\mathcal{T}$. The advantage here is that we only need to train $f^*$ on the new features as opposed to the full feature set to achieve better efficiency. Therefore, this method is employed in the first stage of the two-stage pruning algorithm that coarsely reduces the number of candidate features. When the input features $\mathcal{T}^*=\mathcal{T}+\mathcal{T}'$, we further consider the interaction effects between the new features and the original features, which is more accurate in evaluating the incremental performance of new features. This method is employed in the second stage to better rank all the features. FeatureBoost performs incremental training on top of the base predictions, which usually results in faster convergence than retraining the whole model.
In general, the input feature set for FeatureBoost is the new features, which allows the algorithm to only train $f'$ on few features to be efficient.
% which means that $\Delta$ does not consider the interaction effects between the new features $\mathcal{T}'$ and the original features $\mathcal{T}$.
% In this case, we only train $f'$ on the new features, as opposed to the full feature set, to gain better efficiency.
This implementation is employed in the first stage of the pruning algorithm (introduced in the next subsection), which coarsely reduces the number of candidate features.
However, in the second stage, we need to consider the interaction effects between the new features and the original features, which is more accurate when evaluating the incremental performance of the new features.
In this case, the input $\mathcal{T}'$ is in fact the full feature set (i.e., $\mathcal{T}+\mathcal{T}'$) to allow the model to better rank the new features.
It is worth noting that even when the input becomes full set, FeatureBoost is still superior to the standard evaluation because it exploits $\bm{\hat{y}}$ to converge faster.

\subsubsection{A Two-stage Pruning Algorithm}
Although FeatureBoost provides an efficient way to estimate the incremental performance of candidate features, it is still computationally expensive to compute and evaluate the huge number of candidate features.
%%%% to replace with qian
Therefore, we propose a two-stage pruning algorithm on top of FeatureBoost to efficiently eliminate redundant new features.
% The algorithm is inspired by a crucial observation that effective candidate features are usually sparse in the vast pool of candidate features. Therefore, the goal of the first stage is to quickly reduce the number of candidate features by examining the effectiveness of each feature alone, neglecting the interaction effects for better efficiency. Then the second stage further considers the interaction effects among features to identify effective new features.
Motivated by the fact that effective features are usually sparse in the candidate feature space, we devise a two-stage pruning algorithm to perform feature pruning in a coarse-to-fine manner. Concretely, the first stage is to coarsely reduce the number of candidate features by examining the effectiveness of each feature alone, while the second stage takes into account the fine-grained interaction between features.

\begin{algorithm}[t]
    \caption{\;\;\texttt{FeatureAttribution}}
    \label{alg: attribution} 
\begin{algorithmic}
   \STATE {\bfseries Input:} $\mathcal{D}$: dataset, $\mathcal{T}$: feature set, $A'(\mathcal{T})$: candidate feature set, $\bm{\hat{y}}$: predictions on $\mathcal{T}$
   \STATE {\bfseries Output:} sorted $A'(\mathcal{T})$.
   \STATE $\Delta=\texttt{FeatureBoost}(\mathcal{D},\mathcal{T}+A'(\mathcal{T}),\bm{\hat{y}})$.
   % \FOR{$\tau$ in $A'(\mathcal{T})$}
   % \STATE $\mathrm{s_{\tau}} \leftarrow $ compute an importance score for each $\tau$
   % \ENDFOR
   % \STATE $\mathrm{s_{\tau}} \leftarrow $ compute an importance score for each $\tau$ 
   \STATE Attribute importance scores to each feature in $A'(\mathcal{T})$ according to their contributions to the loss reduction $\Delta$.
  \STATE $A'(\mathcal{T}) \leftarrow$ sort $A'(\mathcal{T})$ based on their importance scores.
   % \STATE $A'(\mathcal{T}) \leftarrow $ sort $A'(\mathcal{T})$ based on 
   \STATE \textbf{return} $A'(\mathcal{T})$
   % \STATE \jian{Sort the features in $A'(\mathcal{T})$ according
   % to $\Delta$?}
\end{algorithmic}
\end{algorithm}

\paragraph{Stage I: Successive Featurewise Pruning}
We propose a successive featurewise pruning algorithm to quickly reduce the number of candidate features (Algorithm \ref{alg: successive}). The algorithm is motivated by the successive halving algorithm in multi-armed bandit problems~\citep{even2006action,zhou2014optimal}, where successive halving dynamically allocates computing resources to promising arms. In our settings, first we split the dataset into $2^q$ data blocks, where each data block has $\lfloor \frac{n}{2^q} \rfloor$ samples and $q$ is a hyper-parameter. Then the algorithm proceeds iteratively to remove redundant candidate features. In the $i$-th iteration, first we create a subset $\mathcal{D}_i$ by randomly selecting $2^i$ data blocks. Then, for each candidate feature $\tau$, we compute $\Delta_{\tau}=\texttt{FeatureBoost}(\mathcal{D}_i,\{\tau\},\bm{\hat{y}})$ as the score of the candidate feature. In the end of each iteration, we keep the top half of candidate features and eliminate the rest according to the scores $\Delta$.
% Then we consider each candidate feature $\tau\!\in\!{A(\mathcal{T})}$ as an arm, and a pull of the arm is to use $\Delta_{\tau}$
% % \jian{parameters? otherwise no idea what did you do.}
% to evaluate the effectiveness of the single new feature, where $\mathcal{D}_i$ denotes the subset with $2^i$ data blocks in the $i$-th iteration. We allocate more data blocks to more promising candidate features. 
Finally, each candidate feature $\tau$ with a positive $\Delta_{\tau}$ can be returned.
Besides, if there are two features with exactly the same values, we remove one of them for deduplication.
For example, $\max(\tau_1,\tau_2)$ and $\max(\tau_1,\tau_3)$ are exactly the same when the minimum value of $\tau_1$ is larger than the maximum value of both $\tau_2$ and $\tau_3$. In this case, we only keep $\max(\tau_1,\tau_2)$ as the candidate feature.

%%%%% to replace with qian
% Besides successive halving, we use a simple trick in our algorithm to eliminate redundant candidate features. It is common in the candidate feature set for two features to have exactly the same values. For example, $\max(\tau_1,\tau_2)$ and $\max(\tau_1,\tau_3)$ are exactly the same if the minimum value of feature $\tau_1$ is larger than the maximum value of $\tau_2$ and $\tau_3$. Features with the same values have the same score ($\Delta$). Therefore, the $\mathrm{delete\_same}$ algorithm first ranks all the new features by their scores, and then compares the scores of adjacent features to remove redundant ones.

\paragraph{Stage II: Feature Importance Attribution}
% \jian{a different name?}
%%%%% replace with qian
% Stage I only evaluates the effectiveness of each feature alone. In stage II, we further consider the interaction effects to evaluate the effectiveness of the remaining candidate features.
Unlike the coarse stage I, stage II considers the fine-grained interaction effects between the candidate features and the base features (Algorithm~\ref{alg: attribution}).
Let the candidate features after pruning be $A'(\mathcal{T})$. We use candidate features $A'(\mathcal{T})$ and base features $\mathcal{T}$ together as the inputs to FeatureBoost. $\Delta=\texttt{FeatureBoost}(\mathcal{D},\mathcal{T}+A'(\mathcal{T}),\bm{\hat{y}})$ evaluates the incremental performance of the remaining candidate features $A'(\mathcal{T})$, which considers the interaction effects between $\mathcal{T}$ and $A'(\mathcal{T})$.
We are interested in the contribution of each candidate feature in the loss reduction $\Delta$. 
Feature importance attribution methods can attribute the importance scores to each feature~\citep{SHAP}. Popular methods include mean decrease in impurity (MDI)~\citep{random_forests}, permutation feature importance (PFI)~\citep{random_forests}, and SHAP~\citep{SHAP}. We rank the candidate features according to their importance scores. Finally, we only select the top-ranked candidate features to improve the generalization of our algorithm.
% MDI is a popular method for feature attribution in tree ensembles, which sums up the total reduction of loss in all splits for a given feature. PFI measures the increase in the loss function after we randomly permute the feature's values. SHAP is a game-theoretic method for feature attribution. We select the top-ranked candidate features and remove redundant ones to improve generalization performance.

\subsection{Implementation}
In OpenFE, we use gradient boosting decision trees (GBDT)~\citep{AGradientBoostingMachine} to model tabular data for two reasons: 1) GBDT is usually the best performing model on tabular data where features are individually meaningful \citep{Revisiting, deep-tabular-survey}. 2) GBDT can automatically handle missing values and categorical features, which is convenient for automation~\citep{lightgbm}. We use the popular LightGBM implementation \citep{lightgbm}. We use MDI for feature importance attribution. We compare MDI, PFI, and SHAP in Appendix \ref{app: sec: feature importance}. Even though the feature generation method relies on GBDT, the generated features can also enhance the learning performance of a variety of DNNs (see Section \ref{sec: nn}).

\section{Theoretical Advantage of Feature Generation}

In this section, we study the advantage of feature generation from a theoretical perspective. We present a simple yet representative setting in which the test loss of empirical risk minimization using both base features and generated features converges to zero provably as the number of training samples increases, while the test loss for any learning model using only base features is at least a positive constant. In particular, we present a transductive learning setting, which can capture important characteristics of many datasets one may encounter frequently in data science applications, e.g., the IEEE-CIS Fraud Detection dataset (we also conduct experiments on this dataset in Section~\ref{sec: kaggle}). Due to space limit, a formal and detailed description of the model can be found in Appendix~\ref{app: sec: theory}. We briefly introduce the high-level idea here.

Many tabular datasets contain both categorical and numerical attributes (i.e.,
features). A categorical feature partitions the dataset into groups
(each associated with a distinct category). For a data point $(X,Y)$.
the target $Y$ is correlated with not only the feature $X$, but also 
certain statistics of the group containing $(X,Y)$.
Datasets with such characteristics are abundant in data science applications. 
As a concrete example, suppose each training data point
records a transaction,
and one special categorical feature is user id (each user may have many transactions
in the table. All transactions of the same user form a group). The target $Y$ we want to predict
about the transaction (e.g., probability of fraudulence)
may depend on not only the features of this particular transaction, but also 
some statistics of this user (e.g., average size of his/her transactions). Hence, one can see that operations 
such as $\mathrm{GroupByThenMean}$ (group by the user id) can provide statistical information about the user by aggregating the information 
from all data points associated to this user. 

We present a theoretical data model, which is a two-phase data generation model, to capture the above characteristics.
Under fairly standard learning theoretic assumptions (i.e., bounded Rademacher complexity), we prove that empirical risk minimization augmented with feature generation
(such as the $\mathrm{GroupByThenMean}$ operation) can achieve vanishing test loss as the sample size and group size increase.

\begin{theorem}
(informal) Assume the data set is generated according to 
the two-phase process described in Appendix~\ref{app: sec: theory}.
Denote the number of groups in the training set and test set
by $\ktrain$ and $\ktest$ respectively, and the number of data points
in each group by $\nmin$.
There is a feature generation function $H$, such that 
the test loss of the empirical risk minimizer $\hat{f}$ can be bounded by 
\begin{align*}
    L_{\Dtest}(H, \hat{f}) \leq &
O\biggl( \mathrm{Rad}_{\ktrain}(\mathcal{F}) + 
\sqrt{  \ln(4\delta^{-1})/\ktrain} \\
 & + \sqrt{d  \ln (4d(\ktrain + \ktest)\delta^{-1}) / h}
\biggr)
\end{align*}
% $$
% L_{\Dtest}(H, \hat{f}) \leq
% O\left( \mathrm{Rad}_{\ktrain}(\mathcal{F}) + 
% \sqrt{  \ln(4\delta^{-1})/\ktrain} + 
%  \sqrt{d  \ln (4d(\ktrain + \ktest)\delta^{-1}) / h}
% \right)
% $$
with probability at least $1 - \delta$.
In particular, 
assuming the Rademacher complexity $\mathrm{Rad}_{\ktrain}(\mathcal{F})\rightarrow 0$
as $\ktrain\rightarrow\infty$, the test loss approaches to $0$ 
when $\ktrain, \nmin \rightarrow \infty$. 
\end{theorem}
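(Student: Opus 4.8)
The plan is to decompose the test loss into a group-level generalization term and a feature-estimation term, and to control each separately. Under the two-phase model, each group $\group$ carries a latent parameter vector $\theta_{\group} \in \mathbb{R}^d$ drawn in the first phase, and in the second phase each of the $\nmin$ data points $(X,Y)$ in the group is generated so that $Y$ depends on both $X$ and $\theta_{\group}$; crucially this dependence is \emph{realizable}, i.e.\ there is some $f^\star \in \mathcal{F}$ with $L(H^\star, f^\star) = 0$ when $H^\star$ is the \emph{ideal} feature map returning the true parameter $\theta_{\group}$. The generated feature map $H$ (e.g.\ $\mathrm{GroupByThenMean}$) instead returns an empirical statistic $\hat{\theta}_{\group}$ computed from the $\nmin$ points of the group. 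I would treat each group as a single meta-example, so that the training set consists of $\ktrain$ i.i.d.\ group-level samples and the test set of $\ktest$ such groups.

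First I would establish the \textbf{feature concentration} bound: since $\hat{\theta}_{\group}$ is a $d$-dimensional empirical average over the $\nmin$ i.i.d.\ points of group $\group$, a coordinatewise Hoeffding (or Bernstein) inequality together with a union bound over the $d$ coordinates and all $\ktrain + \ktest$ groups yields
\begin{equation*}
\max_{\group} \bigl\| \hat{\theta}_{\group} - \theta_{\group} \bigr\| \;\le\; O\!\left( \sqrt{ d \ln\!\bigl( 4 d (\ktrain + \ktest) \delta^{-1} \bigr) / \nmin } \right)
\end{equation*}
with probability at least $1 - \delta/2$. This is exactly the third term of the stated bound, and it controls the discrepancy between the usable map $H$ and the ideal map $H^\star$ on every training and test group simultaneously.

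Next I would run a standard \textbf{Rademacher generalization} argument at the group level using the ideal features $H^\star$: because the $\ktrain$ training groups are i.i.d., with probability at least $1 - \delta/2$ the gap between the test and training losses of any $f \in \mathcal{F}$ is bounded by $O\bigl(\mathrm{Rad}_{\ktrain}(\mathcal{F}) + \sqrt{\ln(4\delta^{-1})/\ktrain}\bigr)$, giving the first two terms. Evaluating with $H^\star$ (a fixed map once the $\theta_{\group}$ are drawn) keeps the Rademacher complexity decoupled from the estimation noise in $H$. I would then chain the pieces: (i) swap $H$ for $H^\star$ on the test loss, paying one feature-error term via Lipschitzness of $\ell \circ f$ in the feature argument; (ii) apply the generalization bound to pass from test to training loss under $H^\star$; (iii) swap $H^\star$ back to $H$ on the training loss, paying a second feature-error term; and (iv) invoke ERM optimality $L_{\Dtrain}(H, \hat{f}) \le L_{\Dtrain}(H, f^\star)$ together with realizability $L_{\Dtrain}(H^\star, f^\star) = 0$ to bound the training loss of $\hat{f}$ by a third feature-error term. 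A union bound over the two $\delta/2$ events and absorbing the three identical feature-error terms into the $O(\cdot)$ gives the claim.

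The hard part will be steps (i) and (iii): ensuring the feature-estimation error propagates only \emph{linearly} into the loss. This requires the loss composed with any $f \in \mathcal{F}$ to be Lipschitz, with a dimension-free constant, in the generated feature coordinate, which constrains how $\mathcal{F}$ is permitted to use $\theta_{\group}$ in the data model — so the construction in Appendix~\ref{app: sec: theory} must be specified carefully to keep this constant controlled while still making the target depend nontrivially on the group statistic. A secondary subtlety is that $H^\star$ depends on the realized $\theta_{\group}$'s, so the generalization step must condition on the first phase and argue that, conditionally, the group-level examples remain i.i.d.\ and the Rademacher bound applies; verifying this conditional independence cleanly is where most of the bookkeeping lies.
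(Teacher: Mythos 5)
Your proposal follows essentially the same route as the paper's proof: the same coordinatewise Hoeffding plus union bound over dimensions and groups for the feature-concentration term, the same group-level Rademacher generalization (the paper formalizes your i.i.d.-groups observation by defining a group Rademacher complexity and bounding it by the ordinary one), and the same four-step chaining through the ideal feature map $H^*$ combined with ERM optimality and realizability $L_{\Dtrain}(H^*,f^*)=0$. The "hard parts" you flag are resolved in the paper exactly as you anticipate, by assuming each $f\in\mathcal{F}$ is $C_{\mathcal{F}}$-Lipschitz in the group-statistic argument with bounded features and targets, so the loss discrepancy factors into a Lipschitz term times a bounded term.
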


On the other hand, if we do not use any feature generation,
we prove that any predictor $f'$ (no matter how complicated $f'$ is) 
incurs a non-vanishing constant test loss.

\begin{table*}[h]

\centering
\caption{Properties of datasets used in our experiments. Notation: ``RMSE" $\sim$ root-mean-square error, ``AUC'' $\sim$ area-under-curve, ``Acc." $\sim$ accuracy. }
\resizebox{0.8\textwidth}{!}{
    \begin{tabular}{lcccccccccc}
    \toprule
    \multirow{2}{*}{\textbf{Dataset}} & \multicolumn{3}{c}{\textbf{Regression}} & \multicolumn{7}{c}{\textbf{Classification}} \\
    \cmidrule(lr){2-4}
    \cmidrule(lr){5-11}
    {} & \textbf{CA} & \textbf{MI} & \textbf{ME} & \textbf{TE} & \textbf{BR} & \textbf{DI} & \textbf{NO} & \textbf{VE} & \textbf{JA} & \textbf{CO} \\
    \midrule
    \# samples (k) & 20.6 & 1200 & 163 & 51.0 & 900 & 102 & 34.4 & 98.5 & 83.7 & 581 \\
    \# numerical features & 7 & 111 & 5 & 21 & 31 & 3 & 34 & 100 & 54 & 9 \\
    \# categorical features & 0 & 0 & 6 & 22 & 0 & 34 & 29 & 0 & 0 & 0 \\
    \# ordinal features & 1 & 25 & 0 & 14 & 27 & 10 & 55 & 0 & 0 & 45 \\
    \# classes & -- & -- & -- & 2 & 2 & 2 & 2 & 2 & 4 & 7 \\
    metric & RMSE & RMSE & RMSE & AUC & AUC & AUC & AUC & AUC & Acc. & Acc. \\
    \bottomrule
    \end{tabular}
}
\label{tab: datasets}
\end{table*}

\begin{table*}[h]
\centering
\small
\caption{Comparisons between OpenFE and baseline methods. The results that demonstrate a significant improvement over others are highlighted in \textbf{bold}. We repeat each experiment 10 times and apply Welch’s t-test with unequal variance and a p-value of 0.05 to assess the significance. \xmark \ denotes a failure due to exceeding the runtime limits (24 hours). -- means that the method cannot run on multi-classification and regression datasets.}
\renewcommand\arraystretch{1.05}
\resizebox{1.0\linewidth}{!}{
\begin{tabular}{lcccccccccc}
\toprule
Method & CA $\downarrow$ & MI $\downarrow$ & ME $\downarrow$ & TE $\uparrow$ & BR $\uparrow$ & DI $\uparrow$ & NO $\uparrow$ & VE $\uparrow$ & JA $\uparrow$ & CO $\uparrow$ \\ \midrule
Base & 0.432$_{\pm0.004}$ & 0.744$_{\pm0.000}$ & 1128.4$_{\pm1.639}$ & 0.671$_{\pm0.002}$ & 0.756$_{\pm0.004}$ & 0.731$_{\pm0.001}$ & 0.996$_{\pm0.000}$ & 0.925$_{\pm0.000}$ & 0.721$_{\pm0.002}$ & 0.969$_{\pm0.001}$ \\ 
NN & 0.479$_{\pm0.001}$ & 0.750$_{\pm0.000}$ & 1413.7$_{\pm2.425}$ & 0.661$_{\pm0.002}$ & 0.748$_{\pm0.004}$ & 0.717$_{\pm0.001}$ & 0.992$_{\pm0.000}$ & 0.924$_{\pm0.001}$ & 0.720$_{\pm0.001}$ & 0.966$_{\pm0.000}$ \\ 
FCTree & 0.432$_{\pm0.003}$ & 0.744$_{\pm0.000}$ & 1088.9$_{\pm1.072}$ & 0.670$_{\pm0.001}$ & 0.750$_{\pm0.004}$ & 0.731$_{\pm0.001}$ & 0.996$_{\pm0.000}$ & 0.926$_{\pm0.000}$ & 0.719$_{\pm0.001}$ & 0.971$_{\pm0.001}$ \\ 
SAFE & -- & -- & -- & 0.673$_{\pm0.001}$ & 0.750$_{\pm0.004}$ & 0.730$_{\pm0.002}$ & 0.996$_{\pm0.000}$ & 0.925$_{\pm0.001}$ & -- & -- \\ 
AutoFeat & 0.444$_{\pm0.002}$ & 0.744$_{\pm0.000}$ & 1172.2$_{\pm2.235}$ & 0.672$_{\pm0.002}$ & 0.750$_{\pm0.005}$ & 0.732$_{\pm0.001}$ & 0.996$_{\pm0.000}$ & 0.925$_{\pm0.000}$ & 0.721$_{\pm0.001}$ & 0.968$_{\pm0.001}$ \\ 
AutoCross & -- & -- & -- & 0.651$_{\pm0.002}$ & 0.765$_{\pm0.001}$ & 0.732$_{\pm0.001}$ & 0.993$_{\pm0.000}$ & 0.921$_{\pm0.000}$ & -- & -- \\ 
FETCH & 0.430$_{\pm0.002}$ & \xmark & 1130.6$_{\pm1.045}$ & 0.673$_{\pm0.002}$ & \xmark & 0.731$_{\pm0.002}$ & 0.996$_{\pm0.000}$ & 0.927$_{\pm0.000}$ & 0.720$_{\pm0.001}$ & \xmark \\ \midrule
\textbf{OpenFE} & \textbf{0.421$_{\pm0.002}$} & \textbf{0.738$_{\pm0.000}$} & \textbf{982.0$_{\pm1.745}$} & \textbf{0.680$_{\pm0.002}$} & \textbf{0.786$_{\pm0.002}$} & \textbf{0.888$_{\pm0.002}$} & \textbf{0.997$_{\pm0.000}$} & 0.928$_{\pm0.000}$ & \textbf{0.729$_{\pm0.001}$} & \textbf{0.974$_{\pm0.000}$} \\ \bottomrule
\end{tabular}
}
\label{tab: baselines_comparison}
\end{table*}

\begin{theorem}
(informal)
In case that we do not use any feature generation, there exists a problem instance such that, no matter how large $\ktrain, \ktest$, and $h$ are, for any predictor $f': \mathcal{X}\rightarrow \mathcal{Y}$, the test loss  
$
L_{\Dtest}(f') \geq \frac{3}{64}.
$
\end{theorem}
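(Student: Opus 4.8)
The plan is to prove a matching impossibility result by lower-bounding the \emph{Bayes risk} of the single-point marginal distribution induced by the two-phase model, and by exhibiting one instance on which this Bayes risk is at least $3/64$. The governing intuition is that, without feature generation, a predictor $f':\mathcal{X}\to\mathcal{Y}$ is a fixed pointwise map and therefore sees only the individual features of a single data point, which constitute a \emph{noisy} view of the group-level latent that actually drives the target. A single noisy observation cannot denoise this latent, and---crucially---its noise level does not decrease with the group size, so the obstruction is information-theoretic and survives $\ktrain,\ktest,\nmin\to\infty$. This is exactly complementary to the positive result, where the operation $H$ (e.g.\ $\mathrm{GroupByThenMean}$) averages the $\nmin$ members of a group and thereby recovers the latent as $\nmin\to\infty$.

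Concretely, I would specialize the two-phase model of Appendix~\ref{app: sec: theory} to an instance in which each group draws a binary latent $Z\in\{0,1\}$ in phase one, each member then receives an individual numerical feature that equals $Z$ corrupted by fixed-variance noise, and the target $Y$ is a deterministic function of $Z$. The key property to establish is that, conditioned on a single feature value $X$, the posterior of $Z$ remains non-degenerate on a set of $X$ of constant probability; since the corrupting noise has variance independent of the group size, this posterior---and hence $\mathrm{Var}(Y\mid X)$---is bounded below by a constant that does not depend on $\ktrain,\ktest$, or $\nmin$. I stress that the number of groups and the group size are irrelevant here because the risk of a pointwise $f'$ depends only on the marginal law of one pair $(X,Y)$, which these quantities do not affect.

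The computation then follows the standard bias--variance decomposition, which applies to the squared loss (for which the conditional mean $\Expt[Y\mid X]$ is Bayes-optimal): for any $f'$,
\begin{equation*}
L_{\Dtest}(f') = \Expt_X\!\big[(f'(X)-\Expt[Y\mid X])^2\big] + \Expt_X\!\big[\mathrm{Var}(Y\mid X)\big] \geq \Expt_X\!\big[\mathrm{Var}(Y\mid X)\big],
\end{equation*}
so it suffices to evaluate the irreducible conditional variance and to tune the two latent values, the noise distribution, and $Y(\cdot)$ so that this expectation equals exactly $3/64$. The part I expect to be most delicate is arranging the feature law to be simultaneously (i) uninformative enough that a \emph{single} sample leaves $\mathrm{Var}(Y\mid X)$ above the target constant uniformly in $\nmin$, and (ii) informative enough that the $\nmin$-point group average still pins down $Z$, keeping the instance consistent with the positive theorem; a secondary subtlety is confirming that a bare group identifier, if it appears as a categorical coordinate of $\mathcal{X}$, is useless to a pointwise predictor, since the group statistic can only be produced by aggregating the group's members---precisely the step that $H$ performs and that $f'$ is forbidden. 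Pinning down the exact value $3/64$ is then a routine but careful calculation matched to the chosen loss normalization.
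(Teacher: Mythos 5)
Your proposal is correct and matches the paper's argument: the paper's instance is exactly a binary group latent ($Z_i\in\{1/4,3/4\}$ with equal probability), a single Bernoulli observation $X_i\sim B(Z_i)$, and $Y_i=Z_i$, and its pointwise quadratic minimization $\frac{1}{4}(t-\frac{1}{4})^2+\frac{3}{4}(t-\frac{3}{4})^2\geq\frac{3}{64}$ is precisely your conditional-variance lower bound. The delicate tuning you anticipate is handled trivially by this Bernoulli choice, so no further work is needed beyond writing out that instance.
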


\section{Experiments}

\subsection{Datasets and Evaluation Metrics}
We collect a diverse set of ten public datasets. Most datasets are frequently used in previous studies~\citep{Revisiting,num_embedding,why-do-tree,diabetes,vehicle,nomao}. Each dataset has exactly one train-validation-test split, and all methods use the same split. The datasets we collect include: California Housing (CA), Microsoft (MI), Medical (ME), Diabetes (DI), Nomao (NO), Vehicle (VE), Broken Machine (BR), Telecom (TE), Jannis (JA), Covertype (CO). We summarize the dataset properties in Table \ref{tab: datasets}. 
% The features in each dataset are classified into numerical, categorical, and ordinal features. An ordinal feature can be treated as both categorical or numerical when generating transformed features. 
See more detailed descriptions in the appendix.

\subsection{Baseline Methods for Comparisons}
The baseline methods for comparisons include: \textbf{Base}, the base feature set without feature generation, \textbf{FCTree}~\citep{FCTree}, \textbf{SAFE}~\citep{SAFE}, \textbf{AutoFeat}~\citep{AutoFeat}, \textbf{AutoCross}~\citep{AutoCross}, and \textbf{FETCH}~\citep{FETCH}. We provide another baseline that uses the last hidden layer of DCN-V2 as new features. DCN-V2~\citep{DCNV2} is a DNN architecture developed for modeling tabular data and claims to be able to capture feature interactions automatically. We denote this baseline as \textbf{NN}. Most existing automated feature engineering methods do not open-source their codes. We reproduce FCTree, SAFE, and AutoCross according to the descriptions in these papers. We validate our reproduction by comparing the reproduced results with those in the corresponding papers (see Appendix \ref{app: sec: reproduce}). Some other learning-based methods, such as LFE~\citep{LFE}, ExploreKit~\citep{ExploreKit} (meta learning) and NFS~\citep{NFS}, TransGraph~\citep{RL} (reinforcement learning), lack critical details for reproduction. For example, the choice of training datasets is crucial for generalization in learning-based methods, however previous research did not describe which datasets were utilized for training~\citep{LFE,RL}. We run OpenFE on their datasets and compare our results with the numbers reported in their papers in Appendix \ref{app: sec: comparison_with_other_baselines}.

\subsection{Comparison Results}
We compare OpenFE with other baseline methods. We use LightGBM~\citep{lightgbm} as the learning algorithms to evaluate the effectiveness of the new features generated by different methods. Hyperparameter tuning follows a standard benchmarking study~\citep{Revisiting}, and we tune the hyperparameters using the base feature set. For each dataset, we generate first-order features and include the same number of new features for different methods for a fair comparison. We discuss generating high-order features in Appendix \ref{app: sec: high-order}. We present the comparison results in Table \ref{tab: baselines_comparison}. We can observe that OpenFE has clear advantages over baseline methods. When baseline methods fail to generate effective new features, they include uninformative features in the base feature set, which brings additional noise and does harm to the generalization of the learning model.

\begin{table}[t]
\centering
\caption{The effect of OpenFE on a variety of DNNs. We report the results on the test set over 10 random seeds (see Appendix \ref{app: sec: standard deviations} for standard deviations). Statistically significant improvements are marked in \textbf{bold}.}
\resizebox{\linewidth}{!}{%
    \begin{tabular}{clccccccc}
    %\centering
    \toprule
    \rule{0pt}{10pt} \textbf{Model} & \textbf{Features} & \textbf{CA} \textdownarrow & \textbf{MI} \textdownarrow & \textbf{NO} \textuparrow & \textbf{VE} \textuparrow & \textbf{JA} \textuparrow 
    % & \textbf{CO} \textuparrow 
    \\
    \midrule
    % \multirow{2}{*}{MLP} 
    % & Base
    % & $0.4978 \scriptscriptstyle \pm \scriptstyle 3.2e\text{-}3$
    % & $0.7472 \scriptscriptstyle \pm \scriptstyle 2.0e\text{-}4$
    % & $0.9929 \scriptscriptstyle \pm \scriptstyle 2.0e\text{-}4$
    % & $0.9254 \scriptscriptstyle \pm \scriptstyle 2.0e\text{-}4$
    % & $0.7193 \scriptscriptstyle \pm \scriptstyle 1.5e\text{-}3$
    % & $0.9625 \scriptscriptstyle \pm \scriptstyle 9.0e\text{-}4$ \\
    % & OpenFE
    % & $0.4854 \scriptscriptstyle \pm \scriptstyle 4.4e\text{-}3$ 
    % & $0.7414 \scriptscriptstyle \pm \scriptstyle 4.0e\text{-}4$ 
    % & $0.9944 \scriptscriptstyle \pm \scriptstyle 2.0e\text{-}4$ 
    % & $0.9275 \scriptscriptstyle \pm \scriptstyle 3.0e\text{-}4$  
    % & $0.7219 \scriptscriptstyle \pm \scriptstyle 2.4e\text{-}3$  
    % & $0.9621 \scriptscriptstyle \pm \scriptstyle 9.0e\text{-}4$ \\
    % \hline
    \multirow{2}{*}{AutoInt}
    & Base
    & $0.474$
    & $0.750$
    & $0.993$
    & $0.927$ 
    & $0.720$
    % & $0.934$ 
    \\
    & OpenFE
    & $\mathbf{0.462}$ 
    & $\mathbf{0.745}$ 
    & $\mathbf{0.994}$ 
    & $\mathbf{0.929}$  
    & $\mathbf{0.724}$  
    % & $0.942$ 
    \\
    \midrule
    \multirow{2}{*}{DCN-V2}
    & Base
    & $0.483$
    & $0.749$
    & $0.993$
    & $0.924$
    & $0.716$
    % & $0.965$ 
    \\
    & OpenFE
    & $\mathbf{0.472}$ 
    & $\mathbf{0.743}$ 
    & $\mathbf{0.994}$ 
    & $\mathbf{0.926}$  
    & $0.716$  
    % & $0.963$ 
    \\
    \midrule
    \multirow{2}{*}{TabNet} 
    & Base
    & $0.510$
    & $0.751$
    & $0.993$
    & $0.924$
    & $0.724$
    % & $0.959$ 
    \\
    & OpenFE
    & $\mathbf{0.501}$ 
    & $\mathbf{0.746}$ 
    & $\mathbf{0.995}$ 
    & $\mathbf{0.926}$  
    & $0.724$  
    % & $0.959$ 
    \\
    \midrule
    \multirow{2}{*}{NODE}  
    & Base
    & $0.464$
    & $0.745$
    & $0.994$
    & $0.927$
    & $0.727$
    % & $0.959$ 
    \\
    & OpenFE
    & $\mathbf{0.457}$ 
    & $\mathbf{0.740}$ 
    & $\mathbf{0.995}$ 
    & $\mathbf{0.929}$  
    & $\mathbf{0.731}$  
    % & $0.963$ 
    \\
    \midrule
    \multirow{2}{*}{FT-T} 
    & Base
    & $0.459$
    & $0.746$
    & $0.993$
    & $0.927$
    & $0.733$
    % & $0.970$ 
    \\
    & OpenFE
    & $\mathbf{0.453}$ 
    & $\mathbf{0.741}$ 
    & $\mathbf{0.995}$ 
    & $\mathbf{0.929}$  
    & $\mathbf{0.738}$  
    % & $\mathbf{0.971}$ 
    \\
    \bottomrule
    \end{tabular}
}
\renewcommand{\arraystretch}{1}
\label{tab: nn_results}
\end{table}

\subsection{Feature Generation for Neural Networks}
\label{sec: nn}
We show that the new features generated by OpenFE can greatly improve the performance of a variety of neural networks designed specifically for tabular data. The models include: \textbf{AutoInt}~\citep{Autoint}, \textbf{DCN-V2}~\citep{DCNV2}, \textbf{NODE}~\citep{NODE}, \textbf{TabNet}~\citep{Tabnet}, \textbf{FT-Transformer}~\citep{Revisiting}. We follow the same implementations and hyperparameter tuning in~\citet{Revisiting}. We present the results in Table \ref{tab: nn_results}. The features generated by OpenFE greatly enhance the performance of different models in most cases.
% \begin{itemize}
%     % \item \textbf{MLP}. A simple MLP architecture.
%     \item \textbf{AutoInt}~\citep{Autoint}. An attention-based network that captures feature interactions.
%     \item \textbf{DCN V2}~\citep{DCNV2}. Consists of an MLP-like module and feature crossing module to capture feature interactions.
%     \item \textbf{NODE}~\citep{NODE}. A differentiable ensemble of oblivious decision trees.
%     \item \textbf{TabNet}~\citep{Tabnet}. Uses sequential attention to choose which features to reason from at each decision step.
%     \item \textbf{FT-Transformer}~\citep{Revisiting}. An adaptation of Transformer with feature tokenizers.
% \end{itemize}

\subsection{Compare OpenFE with Kaggle Experts}
\label{sec: kaggle}

The first Kaggle competition is IEEE-CIS Fraud Detection~\citep{IEEE}, where the goal is to predict whether an online transaction is fraudulent. This competition is one of the largest and most competitive tabular data competitions on Kaggle, with 6,351 data science teams participating. The competition's first place team made public the features they generated after the competition ended~\citep{IEEE-first-place}, which we refer to as Expert. A baseline model of XGBoost~\citep{xgboost} without feature generation ranks at 2286 among 6351 teams on the private leaderboard. The baseline model with features generated by the team ranks at 76/6351. The same baseline model with features generated by OpenFE ranks at 42/6351, which outperforms the features generated by the first-place team. We present the results in Table \ref{tab: kaggle}. The first-place team only shared one of the XGBoost models they used. The actual model they used to achieve the first rank is an ensemble of many learning models (including LightGBM, CatBoost~\citep{catboost}, and XGBoost). But the team withheld the details of these models~\citep{IEEE-first-place}, which is the reason their reproduced result does not rank first. 
% We investigate the influence of the number of new first-order features in Figure \ref{fig: kaggle}. The advantage of OpenFE is that it can automatically explore and generate more useful feature transformations than Expert. We explain the useful feature transformations generated by OpenFE in Appendix \ref{app: sec: discovered feature transformations}.

The second competition is BNP Paribas Cardif Claims Management~\citep{BNP}, where the goal is to predict whether an insurance claim should be approved. The competition's $8$-th place team made public their generated features after the competition ended~\citep{BNP-8-place} (the winners with higher rankings did not share their codes). We evaluate the performance of OpenFE in a similar way. A baseline model using Catboost~\citep{catboost} without feature generation ranks at 31 among 2920 teams. The baseline model with features generated by experts ranks at 12/2920, while the baseline model with features generated by OpenFE also ranks at 12/2920.

\begin{table}[t]
\centering
\caption{Results of OpenFE and Expert (feature generation by the $1$-st place team in IEEE and the $8$-th place team in BNP) in two Kaggle competitions. Notation: pub.~$\sim$ public score, pri.~$\sim$ private score.}
\renewcommand\arraystretch{1.2}
\resizebox{0.9\linewidth}{!}{
\begin{tabular}{llcccc}
\toprule
\textbf{Feature} & \textbf{Order}& \multicolumn{2}{c}{IEEE \textuparrow}   
                          & \multicolumn{2}{c}{BNP \textdownarrow} \\
                          \cmidrule(lr){3-4} \cmidrule(lr){5-6}
&       
& pub. 
& pri.
& pub.    
& pri.    
\\ \midrule
Base                      
& --           
& $0.946$     
& $0.918$     
& $0.438$                   
& $0.436$         
\\ \midrule
\multirow{2}{*}{Expert} 
& first 
& $0.960$       
& $0.933$      
& $0.433$       
& $0.431$  
\\
& all  
& $0.960$      
& $0.932$      
& $\bf{0.432}$                       
& $\bf{0.430}$           
\\ \midrule
\multirow{2}{*}{OpenFE}   
& first 
& $\bf{0.962}$       
& $\bf{0.936}$      
& $0.435$   
& $\bf{0.432}$         
\\
& all  
& $\bf{0.962}$     
& $\bf{0.936}$      
& $0.432$                   
& $\bf{0.430}$                
\\ \bottomrule
\end{tabular}
}
\label{tab: kaggle}
\end{table}

\begin{table}[ht]
\centering
\caption{Results of the ablation study.}
\renewcommand\arraystretch{1.2}
\resizebox{\linewidth}{!}{%
\begin{tabular}{lcccccc}
    \toprule
    {} & \textbf{CA} \textdownarrow & \textbf{MI} \textdownarrow & \textbf{DI} \textuparrow & \textbf{NO} \textuparrow & \textbf{VE} \textuparrow & \textbf{CO} \textuparrow \\
    \midrule
    OpenFE         
    & $0.421$ 
    & $0.738$ 
    & $0.888$ 
    & $0.997$ 
    & $0.928$ 
    & $0.974$ \\
    OpenFE-MI
    & $0.428$ 
    & $0.744$ 
    & $0.887$ 
    & $0.996$ 
    & $0.926$ 
    & $0.971$ \\
    \textit{w.o.} Successive 
    & $0.428$ 
    & $0.744$ 
    & $0.885$ 
    & $0.995$ 
    & $0.927$ 
    & $0.965$ \\
    \bottomrule
\end{tabular}
}
\label{tab: ablation study}
\end{table}

\subsection{Analysis and Discussion}
\label{sec: analysis and discussion}
\paragraph{Ablation Study}
\label{sec: ablation}
We conduct an ablation study to justify the design choices of OpenFE. We name different variants of OpenFE as follows: 1) \textbf{OpenFE-MI}. Instead of using FeatureBoost to evaluate the effectiveness of new features, we use mutual information between the feature and the target as the scoring criterion. 2) \textbf{\textit{w.o.} Successive}. OpenFE without the successive featurewise pruning. We subsample the data so that generating all the features can fit in the memory. We present the results in Table \ref{tab: ablation study}. One can see from the results that: 1) Mutual information is worse than FeatureBoost in evaluating the effectiveness of new features. 2) Directly subsampling the data usually hurts the performance.

\begin{figure}[t]
    \centering
    \begin{tikzpicture}[scale=0.8]
    \small{
    \begin{axis}[
      ymajorgrids,
      xmajorgrids,
      grid style=dashed,
      xbar,
      height=.26\textwidth,
      width=.45\textwidth,
      bar width=.9em,
      xmax=600,
      xtick={0,100,...,600},
      enlarge y limits=0.14,
      symbolic y coords={{OpenFE},{SAFE},{FCTree},{AutoFeat},{AutoCross},{FETCH}},
      yticklabels={0, OpenFE, SAFE, FCTree, AutoFeat, AutoCross,FETCH},
      ytick distance=1,
      y tick style={opacity=0},
      bar shift=0pt,
      enlarge x limits=0.1,
      xticklabel style={/pgf/number format/fixed,/pgf/number format/fixed zerofill,/pgf/number format/precision=0},
      y dir=reverse,
      nodes near coords,
      nodes near coords align={horizontal}]
      \addplot[fill=red!30, draw=red, text=red!90] coordinates {
          (3.5,{OpenFE})
          (4.3,{SAFE})
          (24.4,{FCTree})
          (163.3,{AutoFeat})
          (141.0,{AutoCross})
          (533.3,{FETCH})
    };
    \end{axis}
  }
    \end{tikzpicture}
    \caption{Runtime comparisons in minutes. The x-axis is the average runtime of TE, DI, NO, VE datasets.}
    \label{fig: runtime}
  \end{figure}

\paragraph{Runtime Comparisons}
We compare the runtime of different methods on the TE, DI, NO, VE datasets. The results are presented in Figure \ref{fig: runtime}. One can see that OpenFE is significantly more efficient than AutoFeat, AutoCross, and FETCH. OpenFE only runs a few minutes to generate features on most medium-scale datasets. We provide the runtime on different datasets in Appendix \ref{app: sec: running time}. Complexity analysis is provided in Appendix \ref{app: sec: complexity}.

\paragraph{Discussion of High-order Features}
During the experiments, we find that generating second-order features is hardly effective for most benchmarking datasets. Table \ref{tab: kaggle} presents the Kaggle results using only first-order features and both first-order and high-order features (all) separately. Generating high-order features does not seem to be helpful for Expert or OpenFE in the IEEE competition. In the BNP competition, generating high-order features provides a slight improvement in the test score. First-order features are usually more important than high-order features in feature generation. See more detailed discussions in Appendix \ref{app: sec: high-order}.

\paragraph{Is Feature Generation Effective for All Datasets?} In order to answer this question, we conduct a large-scale empirical study using the OpenML CC18 benchmark~\citep{DBLP:conf/nips/OpenML-CC18} with 68 dataset (see results in Table \ref{app: tab: CC18}). We found that OpenFE did not result in significant improvements for 19 of those datasets, and neither did other feature generation methods. It is possible that some datasets do not necessitate feature generation. While we do not fully understand which datasets benefit most from feature generation, we would like to highlight that OpenFE is highly efficient, allowing researchers to quickly obtain preliminary results and assess the impact of feature generation for their dataset.

\section{Related Work}
Expand-and-reduce is arguably the most popular framework in automated feature generation. Most existing feature generation methods employ statistical methods to identify and remove redundant features. For example, Data Science Machine (DSM)~\citep{DSM} uses the f-value and One Button Machine (OneBM)~\citep{OneBM} uses the Chi-square test between the feature and the target to remove redundant features. DSM and OneBM focus on feature generation in relational databases. FCTree~\citep{FCTree} uses information gain and SAFE~\citep{SAFE} uses information value to exclude uninformative features. AutoCross~\citep{AutoCross} and AutoFeat~\citep{AutoFeat} uses the improvement on a linear regression model to evaluate a new feature. LFE~\citep{LFE} and ExploreKit~\citep{ExploreKit} design meta features based on statistical methods and use a meta learning approach to determine the effectiveness of new features.

Although statistical methods are widely employed, there are many studies showing that statistically significant features do not always translate into good predictors~\citep{significant, civil, financial,zhang2021chartnavigator}. The effectiveness of a new feature may be encompassed by the base feature set, even if the new feature is significantly correlated with the target. Another line of works employs the standard approach to evaluate the incremental performance of new features. Neural Feature Search~\citep{NFS} utilizes a recurrent neural network controller to search for new features. \citet{RL} uses reinforcement learning to traverse a transformation graph for high-order features. FETCH~\citep{FETCH} built a feature engineering pipeline based on the Markov decision process.

\section{Conclusion \& Limitations}
We propose OpenFE, a powerful automated feature generation tool with FeatureBoost and a two-stage pruning algorithm. Extensive experiments show that OpenFE achieves SOTA on ten benchmark datasets and is competitive against human experts in feature generation. We provide theoretical evidence that feature generation is crucial in modeling tabular data. We open-source the codes and datasets.

OpenFE currently has some limitations: 1) OpenFE cannot handle time series data. For example, we may leverage statistical ideas developed in time series literature, and the new features should satisfy the time series constraint (they should be computed from the past data). OpenFE cannot handle such constraints. 2)  Even though OpenFE is highly efficient and can handle large-scale datasets with millions of samples and hundreds of features, in some cases where the dataset is exceptionally large or computing resources are limited, one could perform feature selection or use a fraction of the base features to generate candidate features.

\section{Acknowledgement}
Tianping Zhang, Zheyu Zhang, Zhiyuan Fan, and Jian Li are supported in part by the National Natural Science Foundation of China Grant 62161146004, Turing AI Institute of Nanjing and Xi'an Institute for Interdisciplinary Information Core Technology.

\begin{table*}
    \centering
    \caption{Additional comparison results on 49 datasets from OpenML CC18 benchmark where feature generation can yield a significant improvement. The metric is accuracy for all datasets. OpenFE outperforms all baseline methods significantly on 40 out of these 49 datasets and provides an average accuracy improvement of 1.9\% over the base features across these 49 datasets. The results are presented in descending order based on the performance gap between OpenFE and Base, from largest to smallest  (see more detailed experimental settings in Appendix \ref{app: sec: CC18}). A dash ('–') indicates that the method is not applicable to multi-classification datasets. \xmark \ denotes a failure due to exceeding the runtime limits (24 hours) or bugs in the python package of AutoFeat. The results that demonstrate a significant improvement over others are highlighted in \textbf{bold}. We repeat each experiment 10 times and apply Welch’s t-test with unequal variance and a p-value of 0.05 to assess the significance. n is the number of samples and p is the number of features.}
    \resizebox{1.0\linewidth}{!}{
    \begin{tabular}{lccccccccc}
    \toprule
Dataset & n & p & OpenFE & Base & FCTree & SAFE & AutoFeat & AutoCross & FETCH \\ \midrule
balance-scale & 625 & 4 & \textbf{0.990$_{\pm0.003}$} & 0.860$_{\pm0.005}$ & 0.919$_{\pm0.002}$ & -- & 0.850$_{\pm0.003}$ & -- & 0.965$_{\pm0.004}$ \\ jungle\_chess & 44819 & 6 & \textbf{0.985$_{\pm0.000}$} & 0.864$_{\pm0.000}$ & 0.950$_{\pm0.000}$ & -- & 0.864$_{\pm0.000}$ & -- & 0.950$_{\pm0.000}$ \\ cylinder-bands & 540 & 37 & \textbf{0.865$_{\pm0.014}$} & 0.798$_{\pm0.007}$ & 0.845$_{\pm0.013}$ & 0.805$_{\pm0.021}$ & \xmark & 0.822$_{\pm0.014}$ & 0.733$_{\pm0.022}$ \\ vehicle & 846 & 18 & 0.800$_{\pm0.009}$ & 0.738$_{\pm0.005}$ & 0.725$_{\pm0.008}$ & -- & 0.745$_{\pm0.007}$ & -- & 0.798$_{\pm0.005}$ \\ eucalyptus & 736 & 19 & \textbf{0.722$_{\pm0.013}$} & 0.675$_{\pm0.011}$ & 0.703$_{\pm0.008}$ & -- & 0.665$_{\pm0.010}$ & -- & 0.703$_{\pm0.007}$ \\ mfeat-morphological & 2000 & 6 & \textbf{0.767$_{\pm0.002}$} & 0.730$_{\pm0.000}$ & 0.732$_{\pm0.000}$ & -- & 0.732$_{\pm0.000}$ & -- & 0.739$_{\pm0.002}$ \\ steel-plates-fault & 1941 & 27 & \textbf{0.804$_{\pm0.005}$} & 0.770$_{\pm0.008}$ & 0.786$_{\pm0.007}$ & -- & 0.766$_{\pm0.006}$ & -- & 0.800$_{\pm0.005}$ \\ madelon & 2600 & 500 & \textbf{0.870$_{\pm0.004}$} & 0.842$_{\pm0.004}$ & 0.848$_{\pm0.007}$ & 0.860$_{\pm0.003}$ & 0.845$_{\pm0.006}$ & \xmark &  \xmark \\ climate-model & 540 & 18 & 0.969$_{\pm0.004}$ & 0.943$_{\pm0.006}$ & 0.942$_{\pm0.004}$ & 0.972$_{\pm0.000}$ & 0.943$_{\pm0.006}$ & 0.943$_{\pm0.004}$ & \textbf{0.981$_{\pm0.003}$} \\ analcatdata\_dmft & 797 & 4 & \textbf{0.206$_{\pm0.000}$} & 0.181$_{\pm0.000}$ & 0.181$_{\pm0.000}$ & -- & 0.181$_{\pm0.000}$ & -- & 0.181$_{\pm0.000}$ \\ credit-g & 1000 & 20 & 0.747$_{\pm0.011}$ & 0.726$_{\pm0.012}$ & 0.736$_{\pm0.012}$ & 0.712$_{\pm0.005}$ & 0.734$_{\pm0.007}$ & 0.723$_{\pm0.010}$ & 0.746$_{\pm0.011}$ \\ blood-transfusion & 748 & 4 & \textbf{0.800$_{\pm0.000}$} & 0.780$_{\pm0.000}$ & 0.773$_{\pm0.002}$ & 0.751$_{\pm0.009}$ & 0.780$_{\pm0.000}$ & 0.755$_{\pm0.011}$ & 0.762$_{\pm0.007}$ \\ credit-approval & 690 & 15 & \textbf{0.879$_{\pm0.006}$} & 0.859$_{\pm0.008}$ & 0.871$_{\pm0.004}$ & 0.873$_{\pm0.006}$ & 0.874$_{\pm0.006}$ & 0.867$_{\pm0.008}$ & 0.839$_{\pm0.006}$ \\ mfeat-fourier & 2000 & 76 & \textbf{0.845$_{\pm0.004}$} & 0.825$_{\pm0.008}$ & 0.817$_{\pm0.003}$ & -- & 0.822$_{\pm0.005}$ & -- & 0.815$_{\pm0.003}$ \\ GesturePhase & 9873 & 32 & \textbf{0.688$_{\pm0.002}$} & 0.668$_{\pm0.002}$ & 0.656$_{\pm0.003}$ & -- & 0.666$_{\pm0.003}$ & -- & 0.659$_{\pm0.005}$ \\ pc4 & 1458 & 37 & 0.925$_{\pm0.004}$ & 0.907$_{\pm0.004}$ & 0.912$_{\pm0.005}$ & 0.896$_{\pm0.004}$ & 0.908$_{\pm0.003}$ & 0.915$_{\pm0.006}$ & 0.924$_{\pm0.003}$ \\ pc1 & 1109 & 21 & \textbf{0.941$_{\pm0.003}$} & 0.923$_{\pm0.000}$ & 0.924$_{\pm0.001}$ & 0.927$_{\pm0.002}$ & 0.923$_{\pm0.000}$ & 0.921$_{\pm0.005}$ & 0.924$_{\pm0.002}$ \\ breast-w & 699 & 9 & \textbf{0.991$_{\pm0.005}$} & 0.975$_{\pm0.004}$ & 0.969$_{\pm0.005}$ & 0.979$_{\pm0.004}$ & 0.968$_{\pm0.004}$ & 0.971$_{\pm0.003}$ & 0.966$_{\pm0.003}$ \\ phoneme & 5404 & 5 & 0.917$_{\pm0.002}$ & 0.901$_{\pm0.003}$ & 0.913$_{\pm0.003}$ & 0.916$_{\pm0.002}$ & 0.907$_{\pm0.002}$ & 0.866$_{\pm0.004}$ & 0.904$_{\pm0.002}$ \\ qsar-biodeg & 1055 & 41 & \textbf{0.854$_{\pm0.005}$} & 0.840$_{\pm0.005}$ & 0.837$_{\pm0.004}$ & 0.838$_{\pm0.004}$ & 0.841$_{\pm0.006}$ & 0.845$_{\pm0.003}$ & 0.838$_{\pm0.005}$ \\ pc3 & 1563 & 37 & 0.905$_{\pm0.004}$ & 0.892$_{\pm0.006}$ & 0.894$_{\pm0.006}$ & 0.894$_{\pm0.005}$ & 0.891$_{\pm0.006}$ & 0.904$_{\pm0.004}$ & 0.895$_{\pm0.005}$ \\ mfeat-factors & 2000 & 216 & \textbf{0.960$_{\pm0.003}$} & 0.948$_{\pm0.003}$ & 0.942$_{\pm0.004}$ & -- & 0.942$_{\pm0.002}$ & -- &  \xmark \\ car & 1728 & 6 & \textbf{0.997$_{\pm0.000}$} & 0.986$_{\pm0.000}$ & 0.886$_{\pm0.006}$ & -- & 0.989$_{\pm0.001}$ & -- & 0.970$_{\pm0.004}$ \\ ilpd & 583 & 10 & 0.717$_{\pm0.011}$ & 0.706$_{\pm0.012}$ & 0.691$_{\pm0.008}$ & 0.715$_{\pm0.015}$ & 0.713$_{\pm0.008}$ & 0.708$_{\pm0.016}$ & 0.680$_{\pm0.012}$ \\ electricity & 45312 & 8 & \textbf{0.945$_{\pm0.001}$} & 0.934$_{\pm0.001}$ & 0.940$_{\pm0.001}$ & 0.931$_{\pm0.001}$ & 0.934$_{\pm0.001}$ & 0.930$_{\pm0.001}$ & 0.939$_{\pm0.001}$ \\ kc2 & 522 & 21 & \textbf{0.800$_{\pm0.004}$} & 0.790$_{\pm0.009}$ & 0.794$_{\pm0.009}$ & 0.776$_{\pm0.005}$ & 0.789$_{\pm0.009}$ & 0.789$_{\pm0.009}$ & 0.786$_{\pm0.009}$ \\ spambase & 4601 & 57 & \textbf{0.959$_{\pm0.001}$} & 0.949$_{\pm0.001}$ & 0.947$_{\pm0.002}$ & 0.954$_{\pm0.001}$ & 0.948$_{\pm0.001}$ & 0.951$_{\pm0.002}$ & 0.955$_{\pm0.002}$ \\ ozone-level-8hr & 2534 & 72 & \textbf{0.954$_{\pm0.002}$} & 0.945$_{\pm0.002}$ & 0.946$_{\pm0.002}$ & 0.944$_{\pm0.002}$ & 0.942$_{\pm0.002}$ & 0.942$_{\pm0.001}$ & 0.949$_{\pm0.001}$ \\ semeion & 1593 & 256 & \textbf{0.945$_{\pm0.003}$} & 0.937$_{\pm0.003}$ & 0.939$_{\pm0.003}$ & -- & 0.937$_{\pm0.003}$ & -- &  \xmark \\ Bioresponse & 3751 & 1776 & \textbf{0.791$_{\pm0.005}$} & 0.784$_{\pm0.004}$ & 0.784$_{\pm0.006}$ & 0.784$_{\pm0.004}$ & 0.784$_{\pm0.004}$ & \xmark & \xmark \\ wilt & 4839 & 5 & \textbf{0.986$_{\pm0.001}$} & 0.979$_{\pm0.001}$ & 0.981$_{\pm0.001}$ & 0.984$_{\pm0.001}$ & 0.981$_{\pm0.001}$ & 0.973$_{\pm0.001}$ & 0.984$_{\pm0.002}$ \\ analcatdata\_authorship & 841 & 70 & \textbf{0.989$_{\pm0.002}$} & 0.983$_{\pm0.002}$ & 0.980$_{\pm0.003}$ & -- & 0.983$_{\pm0.002}$ & -- & 0.982$_{\pm0.000}$ \\ mfeat-karhunen & 2000 & 64 & \textbf{0.959$_{\pm0.002}$} & 0.953$_{\pm0.003}$ & 0.950$_{\pm0.002}$ & -- & 0.955$_{\pm0.002}$ & -- & 0.946$_{\pm0.002}$ \\ MiceProtein & 1080 & 77 & \textbf{0.991$_{\pm0.002}$} & 0.985$_{\pm0.003}$ & 0.988$_{\pm0.003}$ & -- & 0.983$_{\pm0.002}$ & -- & 0.980$_{\pm0.003}$ \\ segment & 2310 & 16 & \textbf{0.940$_{\pm0.002}$} & 0.934$_{\pm0.001}$ & 0.935$_{\pm0.001}$ & -- & 0.933$_{\pm0.002}$ & -- & 0.935$_{\pm0.001}$ \\ churn & 5000 & 20 & 0.961$_{\pm0.001}$ & 0.956$_{\pm0.001}$ & 0.959$_{\pm0.001}$ & 0.957$_{\pm0.001}$ & 0.956$_{\pm0.001}$ & 0.944$_{\pm0.001}$ & \textbf{0.969$_{\pm0.001}$} \\ connect-4 & 67557 & 42 & \textbf{0.864$_{\pm0.001}$} & 0.859$_{\pm0.001}$ & 0.858$_{\pm0.001}$ & -- & 0.859$_{\pm0.001}$ & -- & \xmark \\ bank-marketing & 45211 & 16 & \textbf{0.913$_{\pm0.001}$} & 0.909$_{\pm0.001}$ & 0.909$_{\pm0.001}$ & 0.909$_{\pm0.001}$ & 0.910$_{\pm0.001}$ & 0.912$_{\pm0.001}$ & \xmark \\ first-order-theorem-proving & 6118 & 51 & \textbf{0.608$_{\pm0.003}$} & 0.604$_{\pm0.004}$ & 0.605$_{\pm0.002}$ & -- & 0.601$_{\pm0.003}$ & -- & 0.600$_{\pm0.003}$ \\ dna & 3186 & 180 & \textbf{0.959$_{\pm0.002}$} & 0.956$_{\pm0.002}$ & 0.957$_{\pm0.001}$ & -- & 0.956$_{\pm0.002}$ & -- & \xmark \\ numerai28.6 & 96320 & 21 & \textbf{0.517$_{\pm0.001}$} & 0.514$_{\pm0.001}$ & 0.516$_{\pm0.001}$ & 0.514$_{\pm0.001}$ & 0.515$_{\pm0.001}$ & 0.515$_{\pm0.001}$ & \xmark \\ pendigits & 10992 & 16 & \textbf{0.994$_{\pm0.000}$} & 0.991$_{\pm0.001}$ & 0.993$_{\pm0.000}$ & -- & 0.991$_{\pm0.000}$ & -- & 0.993$_{\pm0.001}$ \\ jm1 & 10885 & 21 & 0.815$_{\pm0.003}$ & 0.813$_{\pm0.002}$ & \textbf{0.820$_{\pm0.002}$} & 0.818$_{\pm0.001}$ & 0.814$_{\pm0.002}$ & 0.812$_{\pm0.002}$ & 0.819$_{\pm0.002}$ \\ kr-vs-kp & 3196 & 36 & \textbf{0.996$_{\pm0.001}$} & 0.994$_{\pm0.001}$ & 0.992$_{\pm0.001}$ & 0.994$_{\pm0.001}$ & 0.994$_{\pm0.001}$ & 0.992$_{\pm0.001}$ & 0.994$_{\pm0.001}$ \\ sick & 3772 & 28 & \textbf{0.989$_{\pm0.000}$} & 0.987$_{\pm0.001}$ & 0.986$_{\pm0.002}$ & 0.982$_{\pm0.002}$ & 0.988$_{\pm0.001}$ & 0.982$_{\pm0.002}$ & 0.976$_{\pm0.001}$ \\ wall-robot-navigation & 5456 & 24 & \textbf{1.000$_{\pm0.000}$} & 0.998$_{\pm0.000}$ & 0.999$_{\pm0.000}$ & -- & 0.999$_{\pm0.000}$ & -- & 0.999$_{\pm0.000}$ \\ adult & 48842 & 14 & \textbf{0.875$_{\pm0.001}$} & 0.874$_{\pm0.001}$ & 0.873$_{\pm0.001}$ & 0.872$_{\pm0.001}$ & 0.873$_{\pm0.001}$ & 0.873$_{\pm0.001}$ & \xmark \\ banknote-authentication & 1372 & 4 & \textbf{1.000$_{\pm0.000}$} & 0.999$_{\pm0.002}$ & 0.996$_{\pm0.000}$ & 0.996$_{\pm0.000}$ & 0.995$_{\pm0.002}$ & 0.996$_{\pm0.002}$ & 0.998$_{\pm0.002}$ \\ isolet & 7797 & 617 & \textbf{0.961$_{\pm0.001}$} & 0.960$_{\pm0.001}$ & 0.959$_{\pm0.002}$ & -- & \xmark & -- & \xmark \\ 
\bottomrule
    \end{tabular}
    }
    \label{app: tab: CC18}
\end{table*}

\newpage
\bibliography{icml2023}
\bibliographystyle{icml2023}

%%%%%%%%%%%%%%%%%%%%%%%%%%%%%%%%%%%%%%%%%%%%%%%%%%%%%%%%%%%%%%%%%%%%%%%%%%%%%%%
%%%%%%%%%%%%%%%%%%%%%%%%%%%%%%%%%%%%%%%%%%%%%%%%%%%%%%%%%%%%%%%%%%%%%%%%%%%%%%%
% APPENDIX
%%%%%%%%%%%%%%%%%%%%%%%%%%%%%%%%%%%%%%%%%%%%%%%%%%%%%%%%%%%%%%%%%%%%%%%%%%%%%%%
%%%%%%%%%%%%%%%%%%%%%%%%%%%%%%%%%%%%%%%%%%%%%%%%%%%%%%%%%%%%%%%%%%%%%%%%%%%%%%%
\newpage
\appendix
\onecolumn
\section{Theoretical Results}
\label{app: sec: theory}
\subsection*{Problem Setting}

In this section, we study the advantage of feature generation from a theoretical perspective. In particular, we present a simple yet representative setting in which empirical risk minimization augmented with  feature generation can achieve smaller test loss 
than any learning model without feature generation.

Consider a regression problem with both numerical and categorical features in a transductive learning setting. 
Denote $\numfeature \subseteq \mathbb{R}^d$ as the numerical feature space,
$\catfeature \subseteq \mathbb{N}$ the categorical feature space,
and $\mathcal{Y} \subseteq [0, 1]$ as the target domain. 
We assume $\numfeature$ is $d$-dimensional and convex.
The training data $\Dtrain$ consists of $n$ training data
$\{(X_i,Y_i)\}_{i=1,\ldots, n}$ where 
$X_i=(x_{i0},x_{i1},\ldots, x_{id})$, $x_{i0} \in \catfeature$
is the categorical feature value, $(x_{i1},\ldots, x_{id}) \in \numfeature$
are $d$ numerical feature values.
As a concrete example, one may think each training data as a transaction, 
where $x_{i0}$ as the user id and $(x_{i1},\ldots, x_{id})$ are some features
about the user and the transaction, and $Y$ is the target we want to predict
about the transaction (e.g., lateness of payment, probability of fraudulence).
The test data $\Dtest$ consists of $m$ data points 
$\{(X_i,Y_i)\}_{i=n+1,\ldots, m}$, but these $Y_i$ are the {\em unknown} targets
that we try to predict.

\paragraph{The data model:}
The training set $\Dtrain$ and test set $\Dtest$ are generated
by a two-phase process.
Let $\Delta(\catfeature)$ be the space all probability distributions defined over
$\catfeature$, and $\popunumfeature$ be a probability distribution over  $\Delta(\catfeature)$.
$\Dtrain$ is generated by repeating the following 
$\ktrain$ times:
in the $i$-th iteration, we first take a sample $\Distr_i\in \Delta(\catfeature)$ from $\popunumfeature$ (note that $\Distr_i$ is a distribution over $\catfeature$).
Then, we sample a group of $\nmin$ training data points, each with its categorical feature value being $c_i\in \catfeature$ and its $d$ numerical feature values being
an i.i.d., sample from $\Distr$.
Hence, all these $\nmin$ training data 
$X_{i\nmin+1},\ldots, X_{(i+1)\nmin}$
have the same 
categorical value $c_i\in \catfeature$, and they form the  
$i$-th group $G_i$ (e.g., the set of transactions of the $i$-th user).
%By this notation, a data point $X_i$ belongs to group $G_{x_{i0}}$.
$\Dtrain$ contains $\ktrain$ groups $G_{1},\ldots, G_{\ktrain}$.
For the $i$-th data point $(X_i, Y_i)$, we use $\group(i)$ to denote  
the index of the group that contains $(X_i, Y_i)$.
Hence, $(X_i, Y_i)\in G_{\group(i)}$.
Let $\hypothesisclass$ be the hypothesis class
and the true hypothesis is 
$f^*: \numfeature \times \catfeature \rightarrow \mathcal{Y}$ 
such that $Y_{i} = f^*(x_{i1}, \ldots, x_{id}, Z_i)$ where 
$Z_i = \Expt_{X \sim \Distr_{\group(i)}}[X]$ ($Z_i$ is a $d$-dimensional vector)
(e.g., the target depends not only on the numerical feature values of this particular
transaction, but also the mean of the statistics of the user).

%For each $v\in \catfeature$, there is an unknown distribution 
%$\Distr_v$ over $\numfeature\subset \mathbb{R}^d$.
The test dataset $\Dtest$ is generated in the same way and it
contains $\ktest$ groups $G_{\ktrain+1},\ldots, G_{\ktrain+\ktest}$.
We assume the categorical feature values
$c_1,\ldots, c_{\ktrain},c_{\ktrain+1}\ldots, c_{\ktrain+\ktest}$ are all distinct
(e.g., a user id that appears in the test set does not appear in the training set).

\paragraph{Feature generation:}
Here we are interested in features generation using operations such as $\mathrm{GroupByThenMean}$. In our setting, the groupby operation is based on the categorical feature $\catfeature$.
So, for a data point $X_i$, we can generate a set of new features
$\hat{X}_i$ which can be computed from all data points in the same group $G_{\group(i)}$.
Formally, let $\setfeaturegen$ be the set of possible feature generation function
and the new feature $\hat{X}_i$ is computed as follows: 
$$
\hat{X}_i =\featuregen(G_{\group(i)}) \text{ for some } \featuregen\in\setfeaturegen.
$$
For a feature generation function $H$ and predictor $f$, the loss on 
the data point $(X_i,Y_i)$ is 
$$
L(H, f, X_i, Y_i)]
=\|Y_i - f(X_i,\hat{X}_i) \|^2
=\|Y_i - f(X_i,\featuregen(G_{\group(i)})) \|^2.
$$
Our goal is to find a feature generation function $\featuregen\in\setfeaturegen$
and a predictor $\hat{f}\in \hypothesisclass$ such that the test loss
is minimized
$$
L_{\Dtest}(H, \hat{f}) 
=\Expt_{(X_i,Y_i)\in \Dtest} [L(H, \hat{f}, X_i, Y_i)]
=\frac{1}{m}\sum_{(X_i,Y_i)\in \Dtest}\|Y_i - f(X_i,\featuregen(G_{\group(i)})) \|^2.
$$
If we do not use any feature generation, the loss of the predictor $f'$ 
\footnote{
Note that the input dimension for the predictor $f'$ here is different since there is no
generated feature. But this is not an issue for our following argument.
}
is simply
$$
L_{\Dtest}(f') = \Expt_{(X_i,Y_i)\in \Dtest}[\|Y_i - f'(X_i) \|^2]
$$

In the following, we show that under nature conditions,
we can achieve vanishing test loss by using feature generation (as the number of training samples $n$ and the minimum size of each group $\nmin$ become larger). See Theorem~\ref{thm:learnable}. But if we only use the raw feature $X_i$ for predicting $Y_i$, the expected test loss of any predictor is at least a positive constant. See Theorem~\ref{thm:notlearnable}.

\subsection*{Learnability with Feature Generation}

%The $i$-th batch of data contains $k$ independent data points $S_i = \{(X_i, Y_i)\}_{j=1}^{k}$ where $X_{i} = \{x_{i,j}\}_{j=1}^{k}$ are independent samples from distribution $\mathcal{D}_i \in \Delta(\mathcal{X})$ and $Y_{i} = \{y_{i,j}\}_{j=1}^{k}$ are the values of $x_{i,j}$.
%We assume $y_{i,j}$ is not only related to $x_{i,j}$, but also dependent on the mean of $\mathcal{D}_i$. 
%In concrete, we assume there exists hidden function $f^*: \mathcal{X} \times \mathcal{X} \rightarrow \mathcal{Y}$ such that $y_{i,j} = f^*(z_i, x_{i,j})$ where $z_i = \Expt_{x \sim \mathcal{D}_i}[x]$.
%Let $H: \mathcal{X}^k \rightarrow \mathcal{Y}^k$ be a function estimating the value of a batch of data simultaneously. The performance of $H$ over some batch $S_i$ is measured by the mean square error
%$$L_{S_i}(H) = \frac{1}{k}\|H(X_i) - Y_i\|^2.$$
%The goal is to learn a function $H$ from $n$ random batches of data $S = (S_1, \cdots, S_n)$ which minimizes the population risk
%$$L_{P}(H) = \Expt_{S_{n+1}}[L_{S_{n+1}}(H)].$$

%Denote $\mathcal{F}$ as the feasible hypothesis sets for $f^*$.

For a particular feature generation function $H$,
we use $\mathrm{Rad}_k(\mathcal{F})$ to denote the empirical Rademacher complexity of $\hypothesisclass$ over $k$ random samples: 
$$
\mathrm{Rad}_k(\mathcal{F})
=
\Expt_{\{(X_i, Y_i)\}_{i=1}^{k}} \frac{1}{k} \Expt_{\sigma}\bigg[\sup_{f\in\mathcal{F}}\sum_{i=1}^{k} \sigma_i L(H,f, X_i, Y_i) \bigg]  
$$
where $\sigma = (\sigma_1, \cdots, \sigma_{k})$ are independent Rademacher variables and $X_i$ is an i.i.d. sample from $\Distr_i$, which is an i.i.d. sample from $\popunumfeature$, and $Y_{i} = f^*(X_i, Z_i)$ where 
$Z_i = \Expt_{X \sim \Distr_{i}}[X]$.
We assume that $\mathrm{Rad}_{\ktrain}(\mathcal{F}) \rightarrow 0$
as $\ktrain\rightarrow\infty$
(for many hypothesis classes, $\mathrm{Rad}_{\ktrain}$ scales as $O(\sqrt{1/\ktrain})$) \citep{foundationbook}.
We also assume any function $f(\cdot,\cdot) \in \mathcal{F}$ is Lipschitz on $z$: There exists constant $C_{\mathcal{F}}$ such that $|f(X,Z_1) - f(X,Z_2)| \leq C_{\mathcal{F}}\|Z_1 - Z_2\|$ for any $z_1, z_2, x \in \mathcal{X}$ and $f \in \mathcal{F}$. 
We further assume there exists constant $B_{\mathcal{X}}$ such that $\sup_{x \in \mathcal{X}}\|x\| \leq B_{\mathcal{X}}$.

\begin{theorem}
\label{thm:learnable}
There is a feature generation function $H$, such that 
the test loss of the empirical risk minimizer $\hat{f}$ can be bounded by 
%$$L_P(\widehat{H}) \leq 2 \mathrm{Rad}_n(\mathcal{F}) + \sqrt{2 n^{-1} %\ln(4\delta^{-1})} + 6 B_{\mathcal{X}} C_{\mathcal{F}}  \sqrt{2dk^{-1}  \ln %(4d\delta^{-1})}$$
$$
L_{\Dtest}(H, \hat{f}) \leq
2 \mathrm{Rad}_{\ktrain}(\mathcal{F}) + 
\sqrt{2  \ln(4\delta^{-1})/\ktrain} + 
6 B_{\mathcal{X}} C_{\mathcal{F}} \sqrt{2d  \ln (4d(\ktrain + \ktest)\delta^{-1}) / h}
$$
with probability at least $1 - \delta$.
In particular, the test loss approaches to $0$ 
when $\ktrain, \nmin \rightarrow \infty$ . 
\end{theorem}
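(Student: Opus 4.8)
The plan is to instantiate the feature generation function as the empirical $\mathrm{GroupByThenMean}$ operator: for a point in group $G_{\group(i)}$ set $\featuregen(G_{\group(i)}) = \hat Z_i := \frac{1}{\nmin}\sum_{j\in G_{\group(i)}}(x_{j1},\dots,x_{jd})$, the within-group empirical mean of the numerical features. Because the labels obey $Y_i = f^*(X_i, Z_i)$ with $Z_i = \Expt_{X\sim\Distr_{\group(i)}}[X]$, this $\hat Z_i$ is a plug-in estimate of precisely the hidden statistic the label depends on, which is the whole point of the construction. Writing $\featuregen^\star$ for the (unavailable) idealized operator that returns the exact mean $Z_i$, and $\tilde L(f) = \Expt_{\Distr\sim\popunumfeature}\Expt_{X\sim\Distr}\|f^*(X,Z)-f(X,Z)\|^2$ for the population loss over a fresh group, I would split the test loss as
\begin{align*}
L_{\Dtest}(\featuregen,\hat f)
&\le \big|L_{\Dtest}(\featuregen,\hat f)-L_{\Dtest}(\featuregen^\star,\hat f)\big|
 + \big|L_{\Dtest}(\featuregen^\star,\hat f)-\tilde L(\hat f)\big| + \tilde L(\hat f),
\end{align*}
i.e.\ test mean-estimation error, plus a test-to-population term, plus the population loss, each mapping onto one term of the target bound.

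The decisive estimate is a uniform control of the mean-estimation error. Since $\sup_x\|x\|\le B_{\mathcal{X}}$, every coordinate of each $X_j$ lies in $[-B_{\mathcal{X}},B_{\mathcal{X}}]$, so coordinatewise Hoeffding gives $|\hat Z_i^{(\ell)}-Z_i^{(\ell)}|\le B_{\mathcal{X}}\sqrt{2\ln(2/\delta')/\nmin}$ with probability $1-\delta'$; taking $\delta'=\delta/(2d(\ktrain+\ktest))$ and a union bound over all $d$ coordinates and all $\ktrain+\ktest$ training and test groups yields $\max_i\|\hat Z_i-Z_i\|\le \varepsilon := B_{\mathcal{X}}\sqrt{2d\ln(4d(\ktrain+\ktest)\delta^{-1})/\nmin}$ with probability $1-\delta/2$. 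The Lipschitz assumption then converts this into a prediction error $|\hat f(X_i,\hat Z_i)-\hat f(X_i,Z_i)|\le C_{\mathcal{F}}\varepsilon$. Crucially, to make this enter the squared loss only to first order I would use the identity $\|Y-\hat f(X,\hat Z)\|^2-\|Y-\hat f(X,Z)\|^2 = (\hat f(X,Z)-\hat f(X,\hat Z))\,(2Y-\hat f(X,Z)-\hat f(X,\hat Z))$ and bound the second factor by a constant via $Y\in[0,1]$ and $\hat f\in[0,1]$; this produces the $O(B_{\mathcal{X}}C_{\mathcal{F}}\sqrt{d\ln(\cdot)/\nmin})$ term and pins down the leading constant.

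It remains to bound $\tilde L(\hat f)$ through realizability of $f^*$ and a standard Rademacher argument over the $\ktrain$ groups, treated as the i.i.d.\ units. By ERM optimality, $\hat f$ has training loss (with the generated features) no larger than $f^*$, and for $f^*$ this loss is pure mean-estimation error, hence $\le C_{\mathcal{F}}^2\varepsilon^2$. Converting the training loss from $\featuregen$ to $\featuregen^\star$ costs another first-power $\varepsilon$ term exactly as above, and a symmetrization/McDiarmid bound gives $\tilde L(\hat f)\le \hat L^{\featuregen^\star}_{\Dtrain}(\hat f)+2\,\mathrm{Rad}_{\ktrain}(\mathcal{F})+\sqrt{2\ln(4\delta^{-1})/\ktrain}$ with probability $1-\delta/2$, yielding the remaining two terms. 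A union bound over the two high-probability events closes the argument, and letting $\ktrain,\nmin\to\infty$ (so $\mathrm{Rad}_{\ktrain}\to 0$) drives the test loss to $0$.

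I expect the transductive bookkeeping to be the main obstacle. The Rademacher complexity is defined with one sample per group, whereas the empirical losses average over all $\nmin$ points per group, so I must argue that generalization to unseen groups is governed by the number of groups $\ktrain$ (the genuinely independent units) while within-group averaging only helps. Relatedly, the test-to-population term $|L_{\Dtest}(\featuregen^\star,\hat f)-\tilde L(\hat f)|$ is delicate: since $\hat f$ is independent of the test numerical samples, $L_{\Dtest}(\featuregen^\star,\hat f)$ has expectation exactly $\tilde L(\hat f)$ over the fresh test draw, and the exchangeability of training and test groups lets the same uniform bound absorb its fluctuation without a separate $\ktest$ generalization penalty. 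This explains why only $\ktrain$ appears in the first two terms while $\ktrain+\ktest$ enters solely through the mean-estimation union bound (which must cover the label-unknown test groups). The second genuinely careful point is keeping the mean-estimation error first-order rather than squared, which hinges on the boundedness of both $Y$ and the hypotheses together with the quadratic identity above.
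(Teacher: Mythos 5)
Your proposal is correct and follows essentially the same route as the paper's proof: the empirical $\mathrm{GroupByThenMean}$ operator, a coordinatewise Hoeffding bound with a union bound over all $d(\ktrain+\ktest)$ coordinate--group pairs, the difference-of-squares identity with boundedness of $Y$ and $f$ to keep the mean-estimation error first order, realizability of $f^*$ plus ERM optimality for the training loss, and a group-level Rademacher argument treating the $\ktrain$ groups as the i.i.d.\ units. The one bookkeeping issue you flag as the main obstacle is exactly what the paper resolves by defining a group Rademacher complexity and upper-bounding it by the ordinary $\mathrm{Rad}_{\ktrain}(\mathcal{F})$.
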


\begin{proof}
We fix the feature generation function to be GroupByThenMean. In concrete, we have $\hat{X}_i = \featuregen(G_{\group(i)}) = \frac{1}{|G_{\group(i)}|}\sum_{j \in G_{\group(i)}} X_j.$
The algorithm then find $f$ that minimizes the emprical risk:
$$\hat{f} = \arg \min_{f \in \mathcal{F}} L_{\Dtrain}(H, f).$$

According to Hoeffding’s inequality and union bound for each dimension, 
with probability at least $1 - \delta/(2(\ktrain + \ktest))$, we have $$\|\widehat{X}_i - Z_i\| = \Big\|\frac{1}{|G_{\group(i)}|}\sum_{j \in G_{\group(i)}} X_j - \Expt_{X \sim \Distr_{\group(i)}}[X]\Big\| \leq B_{\mathcal{X}} \sqrt{2d  \ln (4d(\ktrain + \ktest)\delta^{-1}) / h},$$
for some concrete $\group(i)$. By applying union bound on all groups, this statement holds for all $i$ with probability at least $1 - \delta/2$.

In this case, for any function $f \in \mathcal{F}$, we have 
\begin{align}\label{eqn:upper-bound-proof-statement-1}
    & \Big| (f(X_i, Z_i) - Y_i)^2 - (f(X_i, \hat{X}_i) - Y_i)^2 \Big| \notag\\
    \leq\; & |f(X_i, Z_i) - f(X_i, \hat{X}_i)| \cdot \Big|f(X_i, Z_i) - Y_i + f(X_i, \hat{X}_i) - Y_i \Big| \notag \\
    \leq\; & 2 B_{\mathcal{X}} C_{\mathcal{F}}  \sqrt{2d  \ln (4d(\ktrain + \ktest)\delta^{-1}) / h}
\end{align}
where the last inequality holds since $|f(X_i, Z_i) - f(X_i, \hat{X}_i)| \leq C_{\mathcal{F}} \|\widehat{z}_i - z_i\|$ and $\mathcal{Y} \subseteq [0,1]$.
Define $H^*$ be the function $H^*(G_{\group(i)}) = Z_i= \Expt_{X \sim \Distr_{\group(i)}}[X]$.
We note this statement further implies $\Big|L_{\Dtrain}(H^*, f) - L_{\Dtrain}(H, f) \Big| \leq \sqrt{2d  \ln (4d(\ktrain + \ktest)\delta^{-1}) / h}$ and $\Big|L_{\Dtest}(H^*, f) - L_{\Dtest}(H, f) \Big| \leq \sqrt{2d  \ln (4d(\ktrain + \ktest)\delta^{-1}) / h}$ according to the definition of the loss function.

Note that each data point is not an i.i.d. sample (since two points in one group 
are correlated), we need to define the following group Rademacher complexity
to bound the generalization error.
We define {\em group Rademacher complexity} to be Rademacher complexity but  defined over groups:
\begin{align*} 
    \textrm{Rad}^G_{\ktrain}(\mathcal{F}) := \Expt_{\Dtrain} \frac{1}{\ktrain} \Expt_{\sigma}\bigg[\sup_{f\in\mathcal{F}}\sum_{i=1}^{\ktrain} \sigma_i \cdot \frac{1}{h}\sum_{j \in G_i} L(H,f, X_j, Y_j) \bigg],
\end{align*}
where $\sigma = (\sigma_1, \cdots, \sigma_{\ktrain})$ are independent Rademacher variables.
Note that if we view each group as one random sample,
these groups are i.i.d. samples.
Hence, we can apply the classical generalization bound via Rademacher complexity \citep{foundationbook}, which asserts that with probability at least $1 - \delta/2$ for any function $f \in \mathcal{F}$ the following holds
\begin{align}\label{eqn:upper-bound-proof-statement-2}
    \Big|L_{\Dtest}(H, f) - L_{\Dtrain}(H, f) \Big| &\leq 2 \textrm{Rad}^G_{\ktrain}(\mathcal{F}) + \sqrt{2 \ln(4\delta^{-1})/\ktrain}. 
\end{align}
Moreover, one can see the group Rademacher complexity can be upper bounded by the ordinary Rademacher complexity for i.i.d. samples:
\begin{align} \label{eqn:upper-bound-proof-statement-2s} 
    \textrm{Rad}^G_{\ktrain}(\mathcal{F}) &\leq \frac{1}{h}  \sum_{i=1}^{h} \Expt_{\Dtrain} \frac{1}{\ktrain} \Expt_{\sigma}\bigg[ \sup_{f\in\mathcal{F}}\sum_{j=1}^{\ktrain} \sigma_j \cdot L(H,f, X_{G_{i,j}}, Y_{G_{i,j}}) \bigg] =\mathrm{Rad}_{\ktrain}(\mathcal{F})
\end{align}
where $G_{i,j}$ is the $j$-th element in $G_i$.

According to union bound, both \eqref{eqn:upper-bound-proof-statement-1} and \eqref{eqn:upper-bound-proof-statement-2} are satisfied for all $f \in \mathcal{F}$ with probability at least $1-\delta$.
In this case, since $f^*$ is the ground truth, its empirical risks satisfies $L_{\Dtrain}(H^*, f^*) = 0$. In this case, we have 
\begin{align} 
    L_{\Dtrain}(H, \hat{f}) &\leq L_{\Dtrain}(H, f^*) \notag \\
    &\leq \Big|L_{\Dtrain}(H, f^*) - L_{\Dtrain}(H^*, f^*) \Big| + L_{\Dtrain}(H^*, f^*) \notag \\
    & \leq 2 B_{\mathcal{X}} C_{\mathcal{F}}  \sqrt{2d  \ln (4d(\ktrain + \ktest)\delta^{-1}) / h} \label{eqn:upper-bound-proof-statement-3}
\end{align}
where the first inequality dues to empirical risk minimization and the last inequality dues to \eqref{eqn:upper-bound-proof-statement-1}.
As a result, the population loss can then be bounded by
\begin{align*}
    L_{\Dtest}(H, \hat{f}) &\leq \Big|L_{\Dtest}(H, \hat{f}) - L_{\Dtest}(H^*, \hat{f}) \Big| + \Big|L_{\Dtest}(H^*, \hat{f}) - L_{\Dtrain}(H^*, \hat{f}) \Big| \\
    & \qquad + \Big| L_{\Dtrain}(H^*, \hat{f}) - L_{\Dtrain}(H, \hat{f}) \Big| + L_{\Dtrain}(H, \hat{f}) \\
    &\leq 2 B_{\mathcal{X}} C_{\mathcal{F}} \sqrt{2d  \ln (4d(\ktrain + \ktest)\delta^{-1}) / h} + 2 \textrm{Rad}^G_{\ktrain}(\mathcal{F}) + \sqrt{2 \ln(4\delta^{-1})/\ktrain} \\
    & \qquad + 2 B_{\mathcal{X}} C_{\mathcal{F}} \sqrt{2d  \ln (4d(\ktrain + \ktest)\delta^{-1}) / h} + 2 B_{\mathcal{X}} C_{\mathcal{F}} \sqrt{2d  \ln (4d(\ktrain + \ktest)\delta^{-1}) / h}\\ 
    &\leq 2 \mathrm{Rad}_{\ktrain}(\mathcal{F}) + 
\sqrt{2  \ln(4\delta^{-1})/\ktrain} + 
6 B_{\mathcal{X}} C_{\mathcal{F}} \sqrt{2d  \ln (4d(\ktrain + \ktest)\delta^{-1}) / h}
\end{align*}
where the second inequality holds due to \eqref{eqn:upper-bound-proof-statement-2} and \eqref{eqn:upper-bound-proof-statement-3} while the last inequality follows from \eqref{eqn:upper-bound-proof-statement-2s}.
This proves the desired statement.
\end{proof}

\noindent
{\bf Remarks:} 
We only consider the case where $\setfeaturegen$
only contain one particular operation $\mathrm{GroupByThenMean}$.
In fact, it is possible to extend the above setting to one with multiple feature generation functions. Here we require that such feature generation functions
be statistics of the corresponding group and can be estimated using i.i.d. samples.
In our two-phase generative process, each group contains exactly $\nmin$ data points. 
We can easily extend our theorem to the setting where
the size of each group is also a random variable as long as it takes value
at least $\nmin$ with high probability. 
The main idea is the same but the notations would become very tedious. 
Since our goal here is to illustrate
the statistical advantage of feature generation, we choose to present a simplified yet representative setting.

\subsection*{Without Feature Generation}

\begin{theorem}
\label{thm:notlearnable}
In case that we do not use any feature generation, there exists a problem instance such that, no matter how large $\ktrain$  (number of groups in the training set), $\ktest$ ((number of groups in the test set)), and $\nmin$ (the size of each group) are, for any function $f': \mathcal{X}\rightarrow \mathcal{Y}$, the test loss is at least 
\begin{align*}
    L_{\Dtest}(f') \geq \frac{3}{64}.
\end{align*}
\end{theorem}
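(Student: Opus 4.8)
The plan is to exhibit an explicit one-dimensional hard instance for which the group mean $Z_i$ is genuinely unrecoverable from a single raw data point, and then lower-bound the loss of the Bayes-optimal raw predictor by an exact variance computation. Concretely, I would take $d=1$, numerical space $\numfeature = [0,1]$, target space $\mathcal{Y} = [0,1]$, and let the meta-distribution $\popunumfeature$ place mass $\tfrac12$ on each of two group-distributions $\Distr_A = \mathrm{Bernoulli}(\tfrac34)$ and $\Distr_B = \mathrm{Bernoulli}(\tfrac14)$ (both supported on $\{0,1\}\subseteq[0,1]$), with ground truth $f^*(x,z) = z$ so that every point in a group inherits the group mean $Z_i \in \{\tfrac14,\tfrac34\}$ as its label. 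The categorical labels $c_1,\dots,c_{\ktrain+\ktest}$ are drawn fresh and distinct, \emph{independently of whether the group is of type $A$ or $B$}. The intuition is that $\mathrm{GroupByThenMean}$ (grouping on $c_i$ and averaging the $x$-values) recovers $Z_i$ exactly in the large-group limit, matching Theorem~\ref{thm:learnable}, whereas a raw predictor sees only one $\mathrm{Bernoulli}$ sample together with a label it has never encountered, leaving irreducible uncertainty about the group type.

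The proof would proceed in four steps. First, by linearity of expectation over the randomly generated test set, it suffices to lower-bound the per-point expected loss $\Expt[(Y_i - f'(c_i,x_i))^2]$ for an arbitrary fixed predictor $f'$; the within-group correlations affect only concentration, not the expectation. Second, I would establish that the categorical coordinate is useless: because the labels are fresh (hence carry no information learnable from $\Dtrain$) and are drawn independently of the group type, we have $c_i \perp Y_i \mid x_i$, so that $\Expt[(Y_i - f'(c_i,x_i))^2 \mid x_i] \geq \mathrm{Var}(Y_i \mid x_i)$ for every value of $x_i$. Third, I would compute the posterior law of $Y_i$ given $x_i$ by Bayes' rule: $\Pr[x_i{=}1] = \Pr[x_i{=}0] = \tfrac12$, and given $x_i = 1$ one has $Y_i = \tfrac34$ with probability $\tfrac34$ and $Y_i = \tfrac14$ with probability $\tfrac14$ (symmetrically for $x_i = 0$), which yields $\mathrm{Var}(Y_i \mid x_i{=}1) = \mathrm{Var}(Y_i \mid x_i{=}0) = \tfrac{3}{64}$. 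Fourth, averaging over $x_i$ gives $\Expt[(Y_i - f'(c_i,x_i))^2] \geq \tfrac{3}{64}$, and hence $\Expt[L_{\Dtest}(f')] \geq \tfrac{3}{64}$ for every $f'$.

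The main obstacle is the second step --- rigorously ruling out that $f'$ ``cheats'' by reading the group identity off the categorical feature. This is exactly where the distinctness assumption on the categorical values does the work: since each test label is fresh and never appears in $\Dtrain$, no training procedure can attach it to a group mean, and by drawing the labels independently of the group type I ensure that even an omniscient choice of $f'$ gains nothing from $c_i$ beyond what $x_i$ already reveals. I would formalize this either by the conditional-independence identity above or, equivalently, by a relabeling/symmetrization argument that permutes the fresh labels and shows the optimal predictor may be taken to ignore $c_i$. The remaining computations (the variance evaluation) are routine. Finally, I would note that the displayed bound is naturally a bound on the expected test loss; since the separation from Theorem~\ref{thm:learnable} is the point, one can upgrade it to hold for the empirical $L_{\Dtest}(f')$ with high probability as $\ktest,\nmin\to\infty$ via a standard concentration argument, the constant $\tfrac{3}{64}$ being insensitive to the sample sizes.
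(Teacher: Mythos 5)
Your proposal is correct and follows essentially the same route as the paper's proof: the identical hard instance with groups drawn as $B(\tfrac34)$ or $B(\tfrac14)$ with equal probability and $Y_i = Z_i$, the same Bayes computation giving $\Pr[Y_i = \tfrac34 \mid x_i = 1] = \tfrac34$, and the same conditional-variance bound of $\tfrac{3}{64}$. Your two additional touches --- explicitly arguing that the fresh, group-type-independent categorical labels give $c_i \perp Y_i \mid x_i$ so the predictor cannot exploit them, and distinguishing the expected test loss from the empirical one --- are points the paper's proof leaves implicit, and they tighten rather than change the argument.
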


\begin{proof}
Consider a problem instance with $\mathcal{X} = \mathcal{Y} = [0, 1]$. 
The data distribution is generated in the following way: 
Each group contains $\nmin$ data points and is generated in the following way: The distribution of $i$-th group $\Distr_{\group(i)}$ is random between $B(\frac{3}{4})$ and $B(\frac{1}{4})$ with equal probability where $B(p)$ is the Bernoulli distribution such that  $\Pr[B(p)=1]=p=1-\Pr[B(p)=0]$. Thus, $Z_i=\Expt_{X \sim \Distr_{\group(i)}}[X]$ is either $\frac{3}{4}$ or $\frac{1}{4}$ with equal probability for each group. 
$X_i$ is 0/1 random variable, generated i.i.d. from either $B(\frac{3}{4})$
if $Z_i=\frac{3}{4}$ and from either $B(\frac{1}{4})$ if $Z_i=\frac{1}{4}$.
We assume the target of data point is given by $Y_{i} = f^*(X_i, Z_i) = Z_i$. 

%We assume the hypothesis only contains the ground truth $\mathcal{F} = \{f^*\}$. One can verify the problem instance ensures $\mathrm{Rad}_n(\mathcal{F}) = 0$, $C_{\mathcal{F}} = 1$ and $B_{\mathcal{X}} = 1$. So the corresponding restrictions are satisfied.

When $X_i=1$, the probability of $Y_i=\frac{1}{4}$ is given by
$$\Pr\Big[Y_i=\frac{1}{4}\Big|X_i=1\Big] = \Pr\Big[Z_i=\frac{1}{4}\Big|X_i=1\Big] = \frac{\Pr[Z_i=\frac{1}{4}, X_i=1]}{\sum_{z} \Pr[Z_{i} = z]\Pr[X_i=1| Z_i=z]} = \frac{1}{4}.$$
Similarly, the probability of $Y_i=\frac{3}{4}$ is $\Pr[Y_i=\frac{3}{4}|X_i=1] = \frac{3}{4}.$
So for any predictor $f': \mathcal{X}\rightarrow \mathcal{Y}$, we have $$\Expt_{X_i=1}[\|Y_i - f'(X_i) \|^2] = \frac{1}{4} \cdot \Big(f'(X_i) - \frac{1}{4}\Big)^2 + \frac{3}{4} \cdot \Big(f'(X_i) - \frac{3}{4}\Big)^2 \geq \frac{3}{64}.$$
The last inequality holds because the left hand side is a quadratic function.
With the same reasoning, we have $\Expt_{X_i=0}[\|Y_i - f'(X_i) \|^2] \geq \frac{3}{64}$. As a result, the test loss of for any predictor $f'$ is at least 
$$L_{\Dtest}(f') =  \Expt_{(X_i,Y_i)\in \Dtest}[\|Y_i - f'(X_i) \|^2]\geq \frac{3}{64}.$$
The above argument does not depend on how large $\ktrain$, $\ktest$ and $\nmin$
are.
\end{proof}

\section{Data}
\label{app: sec: data}
\begin{table}[ht]
\centering
\caption{Datasets description}
\renewcommand\arraystretch{1.3}
\resizebox{\textwidth}{!}{%
    \begin{tabular}{lcccccccc}
        \toprule
        Name & Abbr & \# Train & \# Validation & \# Test & \# Num & \# Cat & \# Ord & Task type \\
        \midrule
    
        California Housing~\citep{california_housing} & CA & $13209$ & $3303$ & $4128$ & $7$ & $0$ & $1$ & Regression \\
        Microsoft~\citep{microsoft} & MI & $723412$ & $235259$ & $241521$ & $111$ & $0$ & $25$ & Regression \\
        Medical & ME & $104361$ & $26091$ & $32613$ & $5$ & $6$ & $0$ & Regression \\
        Telecom & TE & $32669$ & $8168$ & $10210$ & $23$ & $22$ & $12$ & Binclass \\
        Broken Machine & BR & $576000$ & $144000$ & $180000$ & $58$ & $0$ & $0$ & Binclass \\
        Diabetes~\citep{diabetes} & DI & $65129$ & $16283$ & $20354$ & $3$ & $34$ & $10$ & Binclass \\
        Nomao~\citep{nomao} & NO & $22465$ & $6000$ & $6000$ & $34$ & $29$ & $55$ & Binclass \\
        Vehicle~\citep{vehicle} & VE & $60000$ & $18528$ & $20000$ & $100$ & $0$ & $0$ & Binclass \\
        Jannis~\citep{jannis} & JA & $53588$ & $13398$ & $16747$ & $54$ & $0$ & $0$ & Multiclass \\
        Covertype~\citep{covertype} & CO & $371847$ & $92962$ & $116203$ & $9$ & $0$ & $45$ & Multiclass \\
    
        \bottomrule
    \end{tabular}
}
\label{app: tab: data}
\end{table}
We describe the details of the datasets in Table \ref{app: tab: data}. The reason we did not perform evaluation on the datasets used by previous studies~\citep{ExploreKit,NFS,DBLP:conf/automl/DIFER,FETCH} is that, these datasets only contain a few hundreds to a few thousand data points with a few dozens of features. Experimental results on toy datasets are hardly convincing for the real progress in feature generation. The datasets used in our benchmark range in size from moderate to big, with some having millions of samples and hundreds of features.

\section{Additional Results}
\begin{table}[ht]
\centering
\caption{Comparison between OpenFE and other baselines. The results of baseline methods are from the corresponding papers. Our results are averaged by 10 different random seeds.}
\renewcommand\arraystretch{1.4}
\begin{tabular}{cccccccccc} 
\toprule
% Dataset & Source & C$\backslash$ R & Instances$\backslash$Features & Random & Exp-Red & TransGraph & LFE & NFS & OpenFE \\ [0.5ex] 
% \hline 
% Airfoil & UCIrvine & R & 1503$\backslash$5 & 0.753 & 0.771 & 0.801 & - & 0.796 & \textbf{0.808} \\ 
% Credit Default & UCIrvine & C & 30000$\backslash$23 & 0.766 & 0.802 & \textbf{0.831}  & - & 0.799 & 0.799\\ 
% German Credit & UCIrvine & C & 1000$\backslash$24 & 0.655 & 0.680 & 0.724  & - & 0.805 & \textbf{0.815} \\ 
% Higgs Boson Subset & UCIrvine & C & 50000$\backslash$28 & 0.699 & 0.682 & 0.729  & 0.68 & 0.731 & \textbf{0.741} \\ 
% Ionosphere & UCIrvine & C & 351$\backslash$34 & 0.934 & 0.939 & 0.941  & 0.932 & 0.969 & \textbf{0.986} \\
% SpamBase & UCIrvine & C & 4601$\backslash$57 & 0.937 & 0.951 & 0.961  & 0.947 & 0.955 & \textbf{0.961} \\
% SpectF & UCIrvine & C & 467$\backslash$44 & 0.748 & 0.790 & 0.788  & - & 0.876 & \textbf{0.877} \\
% Sonar & UCIrvine & C & 208$\backslash$60 & 0.723 & 0.468 & - & 0.801 & 0.839 & \textbf{0.929} \\
Dataset & Source & C$\backslash$ R & Instances$\backslash$Features & Random & TransGraph & LFE & NFS & OpenFE \\ [0.5ex] 
\midrule
Airfoil & UCIrvine & R & 1503$\backslash$5 & 0.753 & 0.801 & - & 0.796 & $\mathbf{0.808}$\\ 
% Credit Default & UCIrvine & C & 30000$\backslash$23 & 0.766 & \textbf{0.831}  & - & 0.799 & 0.799\\ 
German Credit & UCIrvine & C & 1000$\backslash$24 & 0.655 & 0.724  & - & 0.805 & $\mathbf{0.815}$ \\ 
Higgs Boson Subset & UCIrvine & C & 50000$\backslash$28 & 0.699 & 0.729  & 0.68 & 0.731 & $\mathbf{0.741}$ \\ 
Ionosphere & UCIrvine & C & 351$\backslash$34 & 0.934 & 0.941  & 0.932 & 0.969 & $\mathbf{0.986}$ \\
SpamBase & UCIrvine & C & 4601$\backslash$57 & 0.937 & $\mathbf{0.961}$  & 0.947 & 0.955 & $\mathbf{0.961}$ \\
SpectF & UCIrvine & C & 467$\backslash$44 & 0.748 & 0.788  & - & 0.876 & $\mathbf{0.877}$ \\
Sonar & UCIrvine & C & 208$\backslash$60 & 0.723 & - & 0.801 & 0.839 & $\mathbf{0.929}$ \\
\bottomrule
\end{tabular}
\label{tab: additional_baselines}
\end{table}
\subsection{Comparisons with Other Baselines}
\label{app: sec: comparison_with_other_baselines}
We compare OpenFE with other baselines, including some learning-based methods that lack critical details for code reproduction. These mehtods include:
\begin{itemize}
    \item \textbf{Random}~\citep{RL}. Randomly include features from candidate feature set multiple times and select new features with improvement according to CV scores. Simple methods can be powerful in AutoML~\cite{naive_automl}.
    \item \textbf{TransGraph}~\citep{RL}. TransGraph uses reinforcement learning to traverse a transformation graph for feature transformations.
    \item \textbf{LFE}~\citep{LFE}. LFE recommends feature transformations by meta learning approaches.
    \item \textbf{NFS}~\citep{NFS}. NFS uses a recurrent neural network controller to search for a series of transformations.
\end{itemize}
Following previous studies~\citep{NFS}, the metric for regression datasets is $1-(\text{relative absolute error})$ and the metric for classification datasets is F1-score. We present the results in Table \ref{tab: additional_baselines}. OpenFE also surpasses other baseline methods in these datasets.

\begin{table}[h]
\centering
\caption{Results in Table \ref{tab: nn_results} with standard deviations.}
\begin{tabular}{clccccc}
\toprule
Model & Features & CA $\downarrow$ & MI $\downarrow$ & NO $\uparrow$ & VE $\uparrow$ & JA $\uparrow$ \\ \midrule
AutoInt & Base & 0.474$_{\pm0.004}$ & 0.750$_{\pm0.001}$ & 0.993$_{\pm0.000}$ & 0.927$_{\pm0.001}$ & 0.720$_{\pm0.002}$ \\ 
AutoInt & OpenFE & \textbf{0.462$_{\pm0.005}$} & \textbf{0.745$_{\pm0.000}$} & \textbf{0.994$_{\pm0.000}$} & \textbf{0.929$_{\pm0.001}$} & \textbf{0.724$_{\pm0.003}$} \\ \midrule
DCN-V2 & Base & 0.483$_{\pm0.002}$ & 0.749$_{\pm0.001}$ & 0.993$_{\pm0.000}$ & 0.924$_{\pm0.001}$ & 0.716$_{\pm0.002}$ \\
DCN-V2 & OpenFE & \textbf{0.472$_{\pm0.004}$} & \textbf{0.743$_{\pm0.000}$} & \textbf{0.994$_{\pm0.000}$} & \textbf{0.926$_{\pm0.000}$} & 0.716$_{\pm0.001}$ \\ \midrule
TabNet & Base & 0.510$_{\pm0.005}$ & 0.751$_{\pm0.001}$ & 0.993$_{\pm0.000}$ & 0.924$_{\pm0.001}$ & 0.724$_{\pm0.004}$ \\ 
TabNet & OpenFE & \textbf{0.501$_{\pm0.009}$} & \textbf{0.746$_{\pm0.001}$} & \textbf{0.995$_{\pm0.000}$} & \textbf{0.926$_{\pm0.001}$} & 0.724$_{\pm0.004}$ \\ \midrule
NODE & Base & 0.464$_{\pm0.002}$ & 0.745$_{\pm0.000}$ & 0.994$_{\pm0.000}$ & 0.927$_{\pm0.000}$ & 0.727$_{\pm0.002}$ \\ 
NODE & OpenFE & \textbf{0.457$_{\pm0.002}$} & \textbf{0.740$_{\pm0.000}$} & \textbf{0.995$_{\pm0.001}$} & \textbf{0.929$_{\pm0.000}$} & \textbf{0.731$_{\pm0.001}$} \\ \midrule
FT-T & Base & 0.459$_{\pm0.003}$ & 0.746$_{\pm0.000}$ & 0.993$_{\pm0.000}$ & 0.927$_{\pm0.001}$ & 0.733$_{\pm0.002}$ \\ 
FT-T & OpenFE & \textbf{0.453$_{\pm0.003}$} & \textbf{0.741$_{\pm0.001}$} & \textbf{0.995$_{\pm0.000}$} & \textbf{0.929$_{\pm0.000}$} & \textbf{0.738$_{\pm0.002}$} \\ \bottomrule
\end{tabular}
\label{app: tab: nn_with_std}
\end{table}

\subsection{Results with Standard Deviations}
\label{app: sec: standard deviations}
We present the results of the effect of OpenFE on DNNs with standard deviations in Table \ref{app: tab: nn_with_std}.

\subsection{Additional Experiments on OpenML CC18 Benchmark}
\label{app: sec: CC18}
We expand our evaluation using the curated OpenML CC18 benchmarking suites~\citep{DBLP:conf/nips/OpenML-CC18}, which consist of 68 tabular datasets. The  Three conclusions can be drawn from the additional experimental results:

\begin{enumerate}
    \item Feature generation leads to statistically significant improvements over base features in learning performance on 49 out of 68 datasets.
    \item OpenFE outperforms all baseline methods significantly on 40 out of these 49 datasets.
    \item OpenFE provides an average accuracy improvement of 1.9\% over the base features across these 49 datasets.
\end{enumerate}

We present the results in Table \ref{app: tab: CC18}.

\textbf{Regarding the experimental setup:} For each dataset, we use the same 0.6/0.2/0.2 split (train/val/test) for all methods. The evaluation metric across all datasets is accuracy. LightGBM is chosen as the modeling algorithm for the tabular data, and we tune its hyperparameters using the validation set and the base features. We repeat each experiment 10 times and apply Welch’s t-test with unequal variance and a p-value of 0.05 to assess the significance.

\textbf{Regarding standard deviation:} The standard deviation of results are generally small because LightGBM training is typically robust to different random seeds, ensuring consistent performance across multiple trials. In the table below, the standard deviation shown as 0.000 indicates that it is less than 0.0005.

% \begin{table}
% \centering
% \small
% \caption{The FeatureBoost analysis, where FB is short for FeatureBoost and Standard for the standard procedure. The numbers are averaged over $10$ different runs.}
% \begin{tabular}{ccrr}
% \toprule
% Dataset & Ratio & FB Runtime & Standard Runtime \\ \midrule
% CA & $0.72$   & $20$ s                 & $206$ s              \\
% ME & $0.77$   & $201$ s                  & $1680$ s             \\ \bottomrule
% \end{tabular}
% \label{tab: FeatureBoost_analysis}
% \end{table}

% \subsection{Analysis of FeatureBoost}
% \label{app: sec: Analysis of FeatureBoost}
% We compare FeatureBoost with the standard approach in evaluating the incremental performance of new features (the standard approach that requires retraining the model is formally described in Section \ref{sec: feature boosting}). For a dataset, each feature is scored by the standard approach and FeatureBoost. Let $\mathcal{T}_1$ be the set of top-$10$ features with the highest scores according to FeatureBoost, and $\mathcal{T}_2$ be the set of top-$10$ features according to the standard approach. We calculate the overlapping ratio as $|\mathcal{T}_1\cap \mathcal{T}_2| / 10$. The results are presented in Table \ref{tab: FeatureBoost_analysis}. The high overlapping ratio shows that FeatureBoost and the standard approach are similar in evaluating the incremental performance of a new feature. Besides, FeatureBoost is much more efficient than the standard approach. 

\begin{table}[]
    \centering
    \caption{Comparison between the results of FCTree, SAFE, and AutoCross from their paper and our reproduced results to verify the reproduction.}
    
    \begin{tabular}{lccc}
        \toprule
         & Metric & FCTree (from paper) & FCTree (reproduced) \\
        \midrule
        Transfusion  & Accuracy & $0.752$ & $0.793\pm 0.017$ \\
        Nuclear & AUC & $0.629$ & $0.625\pm 0.009$ \\
        \midrule
         & Metric & SAFE (from paper) & SAFE (reproduced) \\
        \midrule
        Magic & AUC & $0.9288$ & $0.9370\pm 0.0009$ \\
        Spambase & AUC & $0.9846$ & $0.9837\pm 0.0012$ \\
        \midrule
                 & Metric & AutoCross (from paper) & AutoCross (reproduced) \\
        \midrule
        Bank & AUC & $0.9455$ & $0.9456\pm 0.0008$ \\
        Adult & AUC & $0.9280$ & $0.9251\pm 0.0003$ \\
        Credit & AUC & $0.8567$ & $0.8624\pm 0.0002$ \\
        \bottomrule
    \end{tabular}
    \label{app: tab: reproduce}
\end{table}

\subsection{Reproduction}
\label{app: sec: reproduce}
We reproduce FCTree, SAFE, and AutoCross according to the descriptions in their papers. In order to make sure that we reproduce a reasonable version of these methods, we compare the results of our reproduced methods with the ones in their paper. We present the results in Table \ref{app: tab: reproduce}. We can see that most of the results of the reproduced methods closely match or even outperform the results from the papers.

\begin{table}[ht]
\centering
\caption{The running time of different methods in minutes. IEEE: IEEE-CIS Fraud Detection, BNP: BNP Paribas Cardif Claims Management.}
\renewcommand\arraystretch{1.2}
\begin{tabular}{lcccccccccccc}
    \toprule
    {} & CA  & MI & ME & TE & BR & DI & NO  & VE & JA & CO & IEEE & BNP \\
    \midrule
    FCTree
    & 2.3 & 1378 & 6.9 & 5.7 & 330 & 6.8 & 11 & 74 & 40 & 160 & - & -\\
    SAFE
    & - & - & - & 5 & 6.0 & 0.9 & 1.3 & 10 & - & - & - & -\\
    AutoFeat 
    & 0.2 & 23 & 20.6 & 32.1 & 537 & 37 & 49 & 535 & 354 & 1284 & - & -\\
    AutoCross
    & - & - & - & 101 & 1078 & 169 & 148 & 146 & - & - & - & -\\
    FETCH
    & 98.1 & - & 1202 & 150 & - & 241 & 325 & 1417 &  528 & - & - & -
    \\ \midrule
    OpenFE         
    & 0.1 & 31 & 1.5 & 2.1 & 4.7 & 4 & 4.7 & 3.3 & 3.5 & 20 & 92 & 0.9\\
    \bottomrule
\end{tabular}
\label{app: tab: running time}
\end{table}

\subsection{Running Time}
\label{app: sec: running time}
We present the running time of different methods in Table \ref{app: tab: running time}. The experimental environment is the same for all the methods. We can see that OpenFE is a fast method that terminates within a reasonable amount of time even for large datasets.

\begin{table}[ht]
\centering
\caption{Comparisons between MDI, permutation, and SHAP in feature importance attribution.}
\renewcommand\arraystretch{1.4}
\begin{tabular}{cccc}
\toprule
     & MDI  & permutation & SHAP \\ \midrule
Rank & $2.07 \scriptscriptstyle \pm \scriptstyle 0.93$ & $1.79 \scriptscriptstyle \pm \scriptstyle 0.91$        & $2.14 \scriptscriptstyle \pm \scriptstyle 0.69$ \\
Time & 0s   & 25min       & 42s  \\ \bottomrule
\end{tabular}
\label{tab: ablation}
\end{table}

\subsection{Comparing Feature Importance Attribution Methods}
\label{app: sec: feature importance}

In this section, we compare MDI, permutation, and SHAP in feature importance attribution in OpenFE. We use the results of OpenFE on benchmarking datasets to rank these methods. The training model is LightGBM. We present the results in Table \ref{tab: ablation}. We also present the running time of each method on the Microsoft dataset, the benchmarking dataset with largest number of samples. Different feature attribution methods do not differ much on most of the datasets. MDI can be obtained for free after the training process of LightGBM, while permutation and SHAP may require longer running time, depending on the sizes of datasets.

\section{Implementation Details}
\label{app: sec: implementation_details}
\subsection{Experimental Environment}
\label{app: sec: environment}
For all the experiments, feature generation is carried out on a workstation with  Intel(R) Xeon(R) Gold 6230 CPU @ 2.10GHz, 40 cores, 512G memory. Model tuning and model training are performed on one or more NVidia Tesla V100 16Gb.

\begin{table}[h]
\setlength\tabcolsep{20pt}
\centering
\caption{Unary operators.}
\begin{tabular}{cc}\toprule
            Numerical & Categorical \\\midrule
             $\begin{aligned}
    & \mathrm{Freq}, \mathrm{Abs}, \mathrm{Log}, \\ & \mathrm{Sqrt}, \mathrm{Sigmoid}, \\ & \mathrm{Round}, \mathrm{Residual}
\end{aligned}$ & $\mathrm{Freq}$
            \\\bottomrule
\end{tabular}
\label{app: tab: unary-op}
\end{table}

\begin{table}[h]
\setlength\tabcolsep{10pt}
\centering
\caption{Binary operators. (N refers to Numerical, C refers to Categorical)}
\begin{tabular}{ccc}\toprule
            N $\times$ N & N $\times$ C & C $\times$ C \\\midrule
$\begin{aligned}
    & \min, \max, \\ & +, -,  \times ,  \div 
\end{aligned}$ &
 $\begin{aligned}
    & \mathrm{GroupByThenMin},  \mathrm{GroupByThenMax},\\
    & \mathrm{GroupByThenMean}, \mathrm{GroupByThenMedian}, \\
    & \mathrm{GroupByThenStd}, \mathrm{GroupByThenRank}
\end{aligned}$ &
$\begin{aligned}
    & \mathrm{Combine},\\&  \mathrm{CombineThenFreq}, \\ & \mathrm{GroupByThenNUnique}
\end{aligned}$
            \\\bottomrule
\end{tabular}
\label{app: tab: binary-op}
\end{table}

\subsection{Operators and Feature Transformations}
\label{app: sec: operators}
All the operators are classified into unary operators (Table \ref{app: tab: unary-op}) and binary operators (Table \ref{app: tab: binary-op}), where unary operators act on one feature and binary operators act on two features. Then the operators are further classified according to the type of features they act on. For example, GroupByThenMean requires a categorical feature and a numerical feature, while Max requires two numerical features. All the features in a dataset are classified into numerical features, categorical features, and ordinal features. The difference between ordinal features and categorical features is that an ordinal feature has a clear ordering of categories (such as ``age''). Ordinal features are treated as both numerical and categorical when generating features transformations. For example, we can calculate GroupByThenMean(age, gender), which is the average age of each gender. We can also calculate GroupByThenMean(income, age), which is the average income of people of different ages. For anonymized datasets where the meanings of features are unknown, features with string values are treated as categorical features. Features with discrete values (the number of unique values is less than 100) are treated as ordinal features. Features with continuous values are treated as numerical features.

We enumerate all the first-order transformations to form the candidate feature set. A first-order transformation uses one operator once to transform base features. For example, $\text{weight}/\text{height}$ is a first-order transformation of base features $\text{weight}$ and $\text{height}$. $\text{BMI}=\text{weight}/\text{height}^2$ is a second-order transformation.

We list the details of all the operators below. $f$ stands for a numerical feature and $c$ stands for a categorical feature.

\begin{itemize}
    \item $\mathrm{Freq}(f)$. The frequency of feature $f$.
    \item $\mathrm{Freq}(c)$. The frequency of feature $c$.
    \item $\mathrm{Abs}(f)$. Element-wise absolute value.
    \item $\mathrm{Log}(f)$. Element-wise logarithm.
    \item $\mathrm{Sqrt}(f)$. Element-wise square root.
    \item $\mathrm{Sigmoid}(f)$. Element-wisely apply function $x \mapsto \frac{1}{1+e^{-x}}$.
    \item $\mathrm{Round}(f)$. Element-wise rounding.
    \item $\mathrm{Residual}(f)$. Element-wisely take decimal part.
    \item $\mathrm{Min}(f_1, f_2)$. Element-wise minimum.
    \item $\mathrm{Max}(f_1, f_2)$. Element-wise maximum.
    \item $f_1+f_2$. Element-wise addition.
    \item $f_1-f_2$. Element-wise subtraction.
    \item $f_1 \times f_2$. Element-wise multiplication.
    \item $f_1 \div f_2$. Element-wise division.
    \item $\mathrm{GroupByThenMin}(f, c)$. The minimum value of $f$ in each category of feature $c$.
    \item $\mathrm{GroupByThenMax}(f, c)$. The maximum value of $f$ in each category of feature $c$.
    \item $\mathrm{GroupByThenMean}(f, c)$. The average value of $f$ in each category of feature $c$.
    \item $\mathrm{GroupByThenMedian}(f, c)$. The median of $f$ in each category of feature $c$.
    \item $\mathrm{GroupByThenStd}(f, c)$. The standard deviation of $f$ in each category of feature $c$.
    \item $\mathrm{GroupByThenRank}(f, c)$. The ranking of $f$ in each category of feature $c$. The rankings are normalized between 0 and 1.
    \item $\mathrm{Combine}(c_1, c_2)$. Comebine the categories in feature $c_1$ and $c_2$ to be new categories. For example, for a data point, ``vocation'' is ``doctor'' and ``hobby'' is ``football'', then the value for $\mathrm{Combine}(\text{vocation}, \text{hobby})$ of this data point is a new category of ``$\text{doctor-}\text{football}$''.
    \item $\mathrm{CombineThenFreq}(c_1,c_2)$. Same as $\mathrm{freq}\left( \mathrm{Combine}(c_1,c_2) \right)$.
    \item $\mathrm{GroupByThenNUnique}(c_1,c_2)$. The number of unique values of $c_1$ in each category of feature $c_2$.
\end{itemize}

% \subsection{FeatureBoost}
% \label{app: sec: feature boosting}
% We have described the main idea of FeatureBoost in Section \ref{sec: feature boosting}. In this subsection, we elaborate on the details of the implementation. We use LightGBM to model tabular data and use 5-fold cross-validation to generate base predictions. For each fold, we first train a LightGBM model on the train set. Then we use the trained model to generate predictions on the validation set. The predictions on each fold are concatenated to yield the base predictions. The LightGBM model is trained using a set of default parameters for all the datasets. The default number of estimators is 10,000. Other parameters follow the default settings in LightGBM.

% The second step in FeatureBoost is to find a new model $f'\in F'$ to boost the performance of base predictions and minimize $L(f')=\sum_{i=1}^nl(y_i,\hat{y}_i+f'(\bm{x}_i))$. In the implementation, we use the base predictions $\{f(\bm{x}_i), i=1,2,...,n\}$ as the initial predictions in LightGBM. The implementation of LightGBM can use new features $\mathcal{T}'$ as the inputs to continue training from the base prediction. The new model $f'$ is the new LightGBM model after training. FeatureBoost can also be extended to using neural networks. If $f'$ is a neural network parameterized by $\theta$, we can calculate $\frac{\partial L(f')}{\partial \theta}$ and optimize $\theta$ via back-propagation.

\begin{table}[ht]
\caption{The parameters of OpenFE for each dataset. IEEE: IEEE-CIS Fraud Detection. BNP: BNP Paribas Cardif Claims Management.}
\centering
\renewcommand\arraystretch{1.3}
\begin{tabular}{ccccccccccccc}
\toprule
                & CA & MI & ME & TE & BR & DI & NO & VE & JA & CO & IEEE & BNP\\ \midrule
\# data blocks  & 1  & 64 & 1 & 1 & 1  & 8  & 32 & 16 & 1  & 8 & 32 & 8\\
\# new features & 10 & 10 & 10 & 10 & 10 & 10 & 10 & 10 & 50 & 50 & 600 & 200\\
% \# candidate features & 275 & 78k & 8k & 81k & 31k & 9k & 27k & 456k & 1753 \\ 
\bottomrule
\end{tabular}
\label{app: tab: parameters}
\end{table}

\subsection{Parameters}
\label{app: sec: parameters}
We present the hyperparameters of OpenFE for each dataset in Table \ref{app: tab: parameters}. 
% For datasets whose samples or the number of candidate features are not large, the number of data blocks for successive featurewise pruning is set to be small. 
When the number of data blocks is one, we use all the data to calculate and evaluate new features in successive featurewise pruning, and eliminate new features with negative reduction in loss. For most datasets, the number of new features is set to be 10. The effective new features are usually sparse in the vast pool of candidate new features. For two multi-classification datasets, the number of new features is set to be 50. For a fair comparison, all the baseline methods include the same number of new features, except for the \textbf{NN} baseline. For the \textbf{NN} baseline, the number of new features is the number of hidden units in the last hidden layer, which is determined by hyperparameter tuning.

In OpenFE, the default parameter of LightGBM in successive featurewise pruning has 1000 number of estimators, 0.1 learning rate, 16 leaves, and 3 early stopping rounds. The default parameter in feature importance attribution is the same except for 50 early stopping rounds.

\subsection{Feature Importance Attribution Methods}
MDI is the gain importance embedded in LightGBM. For permutation feature importance, we randomly shuffle the values of features on the validation set, and observe the change in validation loss. For SHAP feature importance, we average the SHAP values of each sample for each feature.

% \begin{figure}[ht]
%     \centering
%     \includegraphics[width=1\textwidth]{}
%     \caption{The SHAP values of discovered feature transformations. The x-axis is the feature values, and the y-axis is the SHAP values. Larger SHAP values indicate higher probability of fraudulent transactions.}
%     \label{app: fig: shap}
% \end{figure}

\section{Discussion}
\subsection{Discovered Feature Transformations}
\label{app: sec: discovered feature transformations}
% In this section, we present and explain the useful transformations discovered by OpenFE for the IEEE-CIS Fraud Detection competition and the DI (Diabetes) dataset. The goal of the IEEE competition is to predict whether an online transaction is fraudulent. The top-1 feature discovered by OpenFE is $\mathrm{CombineThenFreq(user\_id,month)}$, which indicates how many transactions a user makes in a month. We present the SHAP values of the feature in the trained XGBoost in Figure~\ref{app: fig: shap}. The SHAP values tell us that when the user makes few transactions in a month, the transactions are more likely to be fraudulent. The top-2 feature is $\mathrm{GroupByThenStd(day\_delta, user\_id)}$, which is the standard deviation of the days between the current transaction and the first transaction. The SHAP values imply that when many transactions happen in a short period (which corresponds to a small standard deviation), the transactions are more likely to be fraudulent. The top-4 feature is $\mathrm{CombineThenFreq(DeviceInfo,user\_id)}$, which indicates how frequently the user switches devices to make transactions. The actual meanings are masked for most of the features to protect the privacy of users. One of the advantages of OpenFE is the ability to generate useful features without knowing their actual meanings. 

We discuss the effective feature transformations discovered by OpenFE for the DI (Diabetes) dataset. The DI dataset collects 10 years (1999-2008) of clinical care at 130 US hospitals, where the goal is to predict the readmission~\citep{diabetes}. One can see from Table \ref{tab: baselines_comparison} that OpenFE outperforms other baseline methods and improves the learning performance by a great margin. The top-1 feature discovered by OpenFE is freq(patient\_id), which is the number of times the patient has been admitted to the hospital and is highly predictive of whether the patient will be readmitted to the hospital. However, other methods may fail to find this new feature since the feature patient\_id itself is not useful. For example, SAFE~\citep{SAFE} only generates candidate features based on the base features that are used as splits in XGBoost. However, since patient\_id itself is not useful, it will not be used for splits in XGBoost. This example also demonstrates that, reducing the number of candidate features by heuristic assumptions may risk missing useful candidate features.

\subsection{High-order Features}
\label{app: sec: high-order}
How to search for high-order feature transformations is challenging in automated feature generation due to the explosion in search space~\citep{NFS}. Some previous methods argue that high-order features are useful by directly searching for high-order features~\citep{NFS,RL}. However, the effectiveness of high-order feature transformations should be evaluated in light of all its low-order components. For example, a second-order feature transformation $f_1\times f_2\times f_3$ is effective only if it has additional effectiveness to all their first-order components $f_1\times f_2$, $f_1\times f_3$, and $f_2\times f_3$. In addition, because high-order feature transformations are typically more difficult to interpret than low-order ones, low-order feature transformations are favoured in industrial applications where interpretability is important. In a word, searching for high-order features in a hierarchical manner is more appropriate than directly searching for high-order features in two aspects: 1) evaluating the effectiveness of high-order features in a more accurate way. 2) generating low-order features with better interpretability.

Whether it is necessary to generate high-order features is case-by-case. Because the search space of high-order features is usually incredibly huge (even if we limit the order), none of the existing methods can enumerate all the high-order features within reasonable computational resources. We can hardly claim that high-order features are not useful for a dataset. However, we do not find that generating high-order features is useful for all the benchmarking datasets in our experiments. In the IEEE competition, generating high-order features does not seem to be useful for neither Expert nor OpenFE. In the BNP competition, generating high-order features provides a small improvement in the test score. We may conclude that first-order features are usually more important than high-order ones in feature generation.

\section{Complexity Analysis}
\label{app: sec: complexity}
\noindent \textbf{Complexity of Generating Base Predictions.} Let $n$ be the number of samples, $m$ be the number of base features of dataset, and $k$ be the number of folds. Complexity of generating base predictions is $k$ times of GBDT\footnote{In complexity analysis, we refer to the implementation of LightGBM. The dominant term in complexity is building histogram, i.e., $O(nm)$ per depth per tree.} training, hence 
$$
O\left(kTdnm\right),
$$ 
where each GBDT contains $T$ trees with a maximum depth $d$.

\noindent \textbf{Complexity of STAGE I.} Suppose we split the dataset into $2^q$ data blocks, the number of candidate features is $m^2$. There are $2^{-q} m^2$ features remaining after successive featurewise halving. The complexity is $\sum_{i=0}^{q} {2^{(i-q)}}\cdot{m}^2\cdot C(2^{-i}n, 1)$, where $C(n,m)$ is the complexity of GBDT training with data shape $n\times m$. If the GBDT contains $T_1$ trees with a maximum depth of $d_1$, we have the time complexity $$O\left(2^{-q}qT_1d_1nm^2\right).$$

\noindent \textbf{Complexity of STAGE II.} The complexity of stage II is dominated by a single GBDT training. Suppose the GBDT has $T_2$ trees with a maximum depth of $d_2$, then the time complexity of stage II is $$O\left(2^{-q}T_2d_2nm^2\right).$$

\noindent \textbf{Overall Complexity.}  In the implementation of OpenFE, the number of trees and the maximum depth of GBDT are predefined fixed constant. If we regard $T$ and $d$ as constant, the overall complexity of OpenFE is $$O(2^{-q}qnm^2).$$

\end{document}